\newcommand{\remove}[1]{}
\definecolor{skipcolor}{rgb}{1, .92, .92}
\definecolor{rejoincolor}{rgb}{.9, 1, 0.9}
\definecolor{frameskip}{rgb}{0.95, 0, 0}
\definecolor{framerejoin}{rgb}{0, .75, 0}
\definecolor{gold}{rgb}{1,0.553,0}
\definecolor{lightbrown}{rgb}{0.9305,0.86275,0.8}
\definecolor{fillcopper}{rgb}{0.722,0.451,0.2}
\definecolor{fillsilver}{rgb}{0.753,0.753,0.753}
\definecolor{fillgray}{rgb}{0.4,0.4,0.4}
\definecolor{filllightgray}{rgb}{0.6,0.6,0.6}
\definecolor{fillLightgray}{rgb}{0.7,0.7,0.7}
\definecolor{fillLLightgray}{rgb}{0.8,0.8,0.8}
\definecolor{fillLLLightgray}{rgb}{0.9,0.9,0.9}
\definecolor{fillred}{rgb}{1,0.15,0.15}
\definecolor{filllightred}{rgb}{1,0.3,0.3}
\definecolor{fillblue}{rgb}{0.2,0.35,1}
\definecolor{filllightblue}{rgb}{0.75,0.85,1}
\definecolor{fillgreen}{rgb}{0.2,0.7,0.2}
\definecolor{filllightgreen}{rgb}{0.65,0.95,0.65}
\definecolor{fillgreenbox}{rgb}{0.65,0.95,0.65}
\definecolor{fillpurple}{rgb}{0.7,0.4,0.74}
\definecolor{redRPI}{rgb}{0.839,0,0.1098}
\def\math#1{$#1$}
\def\mand#1{$$#1$$}
\def\mld#1{\begin{equation}
#1
\end{equation}}
\def\eqan#1{\begin{eqnarray*}
#1
\end{eqnarray*}}
\def\eqar#1{\begin{eqnarray}
#1
\end{eqnarray}}
\def\frac#1#2{{#1\over #2}}
\DeclareSymbolFont{AMSb}{U}{msb}{m}{n}
\def\choose#1#2{
\mathchoice
{\Big({{#1}\atop{#2}}\Big)}
{\big({{#1}\atop{#2}}\big)}
{({{#1}\atop{#2}})}
{({{#1}\atop{#2}})}
}
\def\cl#1{{\cal #1}}
\newcommand{\given}{\mid}
\def\argmin{\mathop{\hbox{argmin}}\limits}
\def\norm#1{{\|#1\|}}
\def\r#1{{\eqref{#1}}}
\newtheorem{theorem}{\bf Theorem}[section]
\newtheorem{lemma}[theorem]{Lemma}
\newcounter{rmnum}
\def\RN#1{\setcounter{rmnum}{#1}\uppercase\expandafter{\romannumeral\value{rmnum}}}
\def\rn#1{\setcounter{rmnum}{#1}\expandafter{\romannumeral\value{rmnum}}}
\definecolor{shadecolor}{gray}{.85}%
\definecolor{tintedcolor}{gray}{.8}%
\gdef\reallynopagebreak{\par\nopagebreak\@nobreaktrue}}
\providecommand\remove[1]{}
\DeclareSymbolFont{extraup}{U}{zavm}{m}{n}
\DeclareMathSymbol{\varheart}{\mathalpha}{extraup}{86}
\DeclareMathSymbol{\vardiamond}{\mathalpha}{extraup}{87}
\newcommand{\dieface}[2][none]{%
\begin{tikzpicture}[baseline=-5pt,line width=2pt]
\begin{scope}[x=1cm,y=1cm]
\coordinate(center)at(0,0);
\draw[rounded corners=8pt,fill=#1]($(center)+(-0.45,-0.45)$)rectangle($(center)+(0.45,0.45)$);
\ifnum#2=1
\node[circle,draw,fill,inner sep=1pt] at(center){};
\fi
\ifnum#2=2
\node[circle,draw,fill,inner sep=1pt] at($(center)+(-0.2,0)$){};
\node[circle,draw,fill,inner sep=1pt] at($(center)+(0.2,0)$){};
\fi
\ifnum#2=3
\node[circle,draw,fill,inner sep=1pt] at($(center)+(-0.2,-0.2)$){};
\node[circle,draw,fill,inner sep=1pt] at($(center)+(0,0.2)$){};
\node[circle,draw,fill,inner sep=1pt] at($(center)+(0.2,-0.2)$){};
\fi
\ifnum#2=4
\node[circle,draw,fill,inner sep=1pt] at($(center)+(-0.2,0.2)$){};
\node[circle,draw,fill,inner sep=1pt] at($(center)+(0.2,0.2)$){};
\node[circle,draw,fill,inner sep=1pt] at($(center)+(-0.2,-0.2)$){};
\node[circle,draw,fill,inner sep=1pt] at($(center)+(0.2,-0.2)$){};
\fi
\ifnum#2=5
\node[circle,draw,fill,inner sep=1pt] at($(center)+(-0.25,0.25)$){};
\node[circle,draw,fill,inner sep=1pt] at($(center)+(0.25,0.25)$){};
\node[circle,draw,fill,inner sep=1pt] at($(center)+(0,0)$){};
\node[circle,draw,fill,inner sep=1pt] at($(center)+(-0.25,-0.25)$){};
\node[circle,draw,fill,inner sep=1pt] at($(center)+(0.25,-0.25)$){};
\fi
\ifnum#2=6
\node[circle,draw,fill,inner sep=1pt] at($(center)+(-0.2,0.25)$){};
\node[circle,draw,fill,inner sep=1pt] at($(center)+(0.2,0.25)$){};
\node[circle,draw,fill,inner sep=1pt] at($(center)+(-0.2,0)$){};
\node[circle,draw,fill,inner sep=1pt] at($(center)+(0.2,0)$){};
\node[circle,draw,fill,inner sep=1pt] at($(center)+(-0.2,-0.25)$){};
\node[circle,draw,fill,inner sep=1pt] at($(center)+(0.2,-0.25)$){};
\fi
\ifnum#2=7
\node[circle,draw,fill,inner sep=1pt] at($(center)+(-0.15,0.25)$){};
\node[circle,draw,fill,inner sep=1pt] at($(center)+(0.15,0.25)$){};
\node[circle,draw,fill,inner sep=1pt] at($(center)+(-0.25,0)$){};
\node[circle,draw,fill,inner sep=1pt] at($(center)+(0,0)$){};
\node[circle,draw,fill,inner sep=1pt] at($(center)+(0.25,0)$){};
\node[circle,draw,fill,inner sep=1pt] at($(center)+(-0.15,-0.25)$){};
\node[circle,draw,fill,inner sep=1pt] at($(center)+(0.15,-0.25)$){};
\fi
\ifnum#2=8
\node[circle,draw,fill,inner sep=1pt] at($(center)+(-0.25,0.25)$){};
\node[circle,draw,fill,inner sep=1pt] at($(center)+(0.25,0.25)$){};
\node[circle,draw,fill,inner sep=1pt] at($(center)+(-0.25,0)$){};
\node[circle,draw,fill,inner sep=1pt] at($(center)+(0,0.15)$){};
\node[circle,draw,fill,inner sep=1pt] at($(center)+(0,-0.15)$){};
\node[circle,draw,fill,inner sep=1pt] at($(center)+(0.25,0)$){};
\node[circle,draw,fill,inner sep=1pt] at($(center)+(-0.25,-0.25)$){};
\node[circle,draw,fill,inner sep=1pt] at($(center)+(0.25,-0.25)$){};
\fi
\ifnum#2=9
\node[circle,draw,fill,inner sep=1pt] at($(center)+(-0.25,0.25)$){};
\node[circle,draw,fill,inner sep=1pt] at($(center)+(0.25,0.25)$){};
\node[circle,draw,fill,inner sep=1pt] at($(center)+(-0.25,0)$){};
\node[circle,draw,fill,inner sep=1pt] at($(center)+(0,0.25)$){};
\node[circle,draw,fill,inner sep=1pt] at($(center)+(0,0)$){};
\node[circle,draw,fill,inner sep=1pt] at($(center)+(0,-0.25)$){};
\node[circle,draw,fill,inner sep=1pt] at($(center)+(0.25,0)$){};
\node[circle,draw,fill,inner sep=1pt] at($(center)+(-0.25,-0.25)$){};
\node[circle,draw,fill,inner sep=1pt] at($(center)+(0.25,-0.25)$){};
\fi
\end{scope}
\end{tikzpicture}}
\newsavebox{\done}\begin{lrbox}{\done}\dieface{1}\end{lrbox}
\newsavebox{\dtwo}\begin{lrbox}{\dtwo}\dieface{2}\end{lrbox}
\newsavebox{\dthree}\begin{lrbox}{\dthree}\dieface{3}\end{lrbox}
\newsavebox{\dfour}\begin{lrbox}{\dfour}\dieface{4}\end{lrbox}
\newsavebox{\dfive}\begin{lrbox}{\dfive}\dieface{5}\end{lrbox}
\newsavebox{\dsix}\begin{lrbox}{\dsix}\dieface{6}\end{lrbox}
\newsavebox{\dseven}\begin{lrbox}{\dseven}\dieface{7}\end{lrbox}
\newsavebox{\deight}\begin{lrbox}{\deight}\dieface{8}\end{lrbox}
\newsavebox{\dnine}\begin{lrbox}{\dnine}\dieface{9}\end{lrbox}
\newsavebox{\doneR}\begin{lrbox}{\doneR}\dieface[fillred]{1}\end{lrbox}
\newsavebox{\dtwoR}\begin{lrbox}{\dtwoR}\dieface[fillred]{2}\end{lrbox}
\newsavebox{\dthreeR}\begin{lrbox}{\dthreeR}\dieface[fillred]{3}\end{lrbox}
\newsavebox{\dfourR}\begin{lrbox}{\dfourR}\dieface[fillred]{4}\end{lrbox}
\newsavebox{\dfiveR}\begin{lrbox}{\dfiveR}\dieface[fillred]{5}\end{lrbox}
\newsavebox{\dsixR}\begin{lrbox}{\dsixR}\dieface[fillred]{6}\end{lrbox}
\newsavebox{\dsevenR}\begin{lrbox}{\dsevenR}\dieface[fillred]{7}\end{lrbox}
\newsavebox{\deightR}\begin{lrbox}{\deightR}\dieface[fillred]{8}\end{lrbox}
\newsavebox{\dnineR}\begin{lrbox}{\dnineR}\dieface[fillred]{9}\end{lrbox}
\newsavebox{\doneG}\begin{lrbox}{\doneG}\dieface[gold]{1}\end{lrbox}
\newsavebox{\dtwoG}\begin{lrbox}{\dtwoG}\dieface[gold]{2}\end{lrbox}
\newsavebox{\dthreeG}\begin{lrbox}{\dthreeG}\dieface[gold]{3}\end{lrbox}
\newsavebox{\dfourG}\begin{lrbox}{\dfourG}\dieface[gold]{4}\end{lrbox}
\newsavebox{\dfiveG}\begin{lrbox}{\dfiveG}\dieface[gold]{5}\end{lrbox}
\newsavebox{\dsixG}\begin{lrbox}{\dsixG}\dieface[gold]{6}\end{lrbox}
\newsavebox{\dsevenG}\begin{lrbox}{\dsevenG}\dieface[gold]{7}\end{lrbox}
\newsavebox{\deightG}\begin{lrbox}{\deightG}\dieface[gold]{8}\end{lrbox}
\newsavebox{\dnineG}\begin{lrbox}{\dnineG}\dieface[gold]{9}\end{lrbox}
\newcommand{\domsmall}[1]{%
\begin{tikzpicture}[baseline=-5pt,line width=3pt]
\begin{scope}
\coordinate(center)at(0,0);
\ifnum#1=1
\node[circle,draw,fill,inner sep=1pt] at(center){};
\fi
\ifnum#1=2
\node[circle,draw,fill,inner sep=1pt] at($(center)+(-0.2,0)$){};
\node[circle,draw,fill,inner sep=1pt] at($(center)+(0.2,0)$){};
\fi
\ifnum#1=3
\node[circle,draw,fill,inner sep=1pt] at($(center)+(-0.2,-0.2)$){};
\node[circle,draw,fill,inner sep=1pt] at($(center)+(0,0.2)$){};
\node[circle,draw,fill,inner sep=1pt] at($(center)+(0.2,-0.2)$){};
\fi
\ifnum#1=4
\node[circle,draw,fill,inner sep=1pt] at($(center)+(-0.2,0.2)$){};
\node[circle,draw,fill,inner sep=1pt] at($(center)+(0.2,0.2)$){};
\node[circle,draw,fill,inner sep=1pt] at($(center)+(-0.2,-0.2)$){};
\node[circle,draw,fill,inner sep=1pt] at($(center)+(0.2,-0.2)$){};
\fi
\ifnum#1=5
\node[circle,draw,fill,inner sep=1pt] at($(center)+(-0.25,0.25)$){};
\node[circle,draw,fill,inner sep=1pt] at($(center)+(0.25,0.25)$){};
\node[circle,draw,fill,inner sep=1pt] at($(center)+(0,0)$){};
\node[circle,draw,fill,inner sep=1pt] at($(center)+(-0.25,-0.25)$){};
\node[circle,draw,fill,inner sep=1pt] at($(center)+(0.25,-0.25)$){};
\fi
\ifnum#1=6
\node[circle,draw,fill,inner sep=1pt] at($(center)+(-0.2,0.25)$){};
\node[circle,draw,fill,inner sep=1pt] at($(center)+(0.2,0.25)$){};
\node[circle,draw,fill,inner sep=1pt] at($(center)+(-0.2,0)$){};
\node[circle,draw,fill,inner sep=1pt] at($(center)+(0.2,0)$){};
\node[circle,draw,fill,inner sep=1pt] at($(center)+(-0.2,-0.25)$){};
\node[circle,draw,fill,inner sep=1pt] at($(center)+(0.2,-0.25)$){};
\fi
\ifnum#1=7
\node[circle,draw,fill,inner sep=1pt] at($(center)+(-0.15,0.25)$){};
\node[circle,draw,fill,inner sep=1pt] at($(center)+(0.15,0.25)$){};
\node[circle,draw,fill,inner sep=1pt] at($(center)+(-0.25,0)$){};
\node[circle,draw,fill,inner sep=1pt] at($(center)+(0,0)$){};
\node[circle,draw,fill,inner sep=1pt] at($(center)+(0.25,0)$){};
\node[circle,draw,fill,inner sep=1pt] at($(center)+(-0.15,-0.25)$){};
\node[circle,draw,fill,inner sep=1pt] at($(center)+(0.15,-0.25)$){};
\fi
\ifnum#1=8
\node[circle,draw,fill,inner sep=1pt] at($(center)+(-0.25,0.25)$){};
\node[circle,draw,fill,inner sep=1pt] at($(center)+(0.25,0.25)$){};
\node[circle,draw,fill,inner sep=1pt] at($(center)+(-0.25,0)$){};
\node[circle,draw,fill,inner sep=1pt] at($(center)+(0,0.15)$){};
\node[circle,draw,fill,inner sep=1pt] at($(center)+(0,-0.15)$){};
\node[circle,draw,fill,inner sep=1pt] at($(center)+(0.25,0)$){};
\node[circle,draw,fill,inner sep=1pt] at($(center)+(-0.25,-0.25)$){};
\node[circle,draw,fill,inner sep=1pt] at($(center)+(0.25,-0.25)$){};
\fi
\ifnum#1=9
\node[circle,draw,fill,inner sep=1pt] at($(center)+(-0.25,0.25)$){};
\node[circle,draw,fill,inner sep=1pt] at($(center)+(0.25,0.25)$){};
\node[circle,draw,fill,inner sep=1pt] at($(center)+(-0.25,0)$){};
\node[circle,draw,fill,inner sep=1pt] at($(center)+(0,0.25)$){};
\node[circle,draw,fill,inner sep=1pt] at($(center)+(0,0)$){};
\node[circle,draw,fill,inner sep=1pt] at($(center)+(0,-0.25)$){};
\node[circle,draw,fill,inner sep=1pt] at($(center)+(0.25,0)$){};
\node[circle,draw,fill,inner sep=1pt] at($(center)+(-0.25,-0.25)$){};
\node[circle,draw,fill,inner sep=1pt] at($(center)+(0.25,-0.25)$){};
\fi
\draw($(center)+(-0.45,-0.45)$)rectangle($(center)+(0.45,0.45)$);
\end{scope}
\end{tikzpicture}
}
\newsavebox\Mone
\newsavebox{\mycrayon}
\newsavebox\mandown
\newsavebox\nodebox
\newsavebox{\EMSQRD}
\newcommand{\TruthTable}[4]{
\begin{tikzpicture}[x=0.65cm,y=0.375cm,baseline=-3pt]
\pgfmathtruncatemacro\R{2^(#1)};
\foreach\x[count=\i]in{#2}{
\node[](v\i)at(0.7*\i,0){\math{\x}};
}
\node[anchor=west](prop)at($(v#1)+(0.5,0)$){#3};
\draw($(v1.west)+(0,-0.5)$)--($(prop.east)+(0,-0.5)$);
\draw($0.5*(v#1.east)+0.5*(prop.west)+(0,0.5)$)--($0.5*(v#1.east)+0.5*(prop.west)+(0,-\R-0.5)$);
\foreach\t[count=\i] in{#4}{
\node[](tv\i)at($(prop.south)+(0,-\i+0.5)$){\t};
}
\foreach\c in {1,...,#1}{
\foreach\r in {1,...,\R}{
\pgfmathtruncatemacro\s{2^(#1-\c)};
\pgfmathtruncatemacro\a{mod(ceil(\r/\s),2)};
\def\tval{\F};
\ifnum\a<1
\def\tval{\T};
\fi
\node[]at(v\c|-tv\r){\tval};
}}
\end{tikzpicture}
}
\newcommand{\TruthTableTwo}[6]{
\begin{tikzpicture}[x=0.65cm,y=0.375cm,baseline=-3pt]
\pgfmathtruncatemacro\R{2^(#1)};
\foreach\x[count=\i]in{#2}{
\node[](v\i)at(0.7*\i,0){\math{\x}};
}
\node[anchor=west](prop1)at($(v#1)+(0.5,0)$){#3};
\node[anchor=west](prop2)at($(prop1.east)+(0.5,0)$){#5};
\draw($(v1.west)+(0,-0.5)$)--($(prop2.east)+(0,-0.5)$);
\draw($0.5*(v#1.east)+0.5*(prop1.west)+(0,0.5)$)--($0.5*(v#1.east)+0.5*(prop1.west)+(0,-\R-0.5)$);
\foreach\t[count=\i] in{#4}{
\node[](tv\i)at($(prop1.south)+(0,-\i+0.5)$){\t};
}
\foreach\t[count=\i] in{#6}{
\node[](tv\i)at($(prop2.south)+(0,-\i+0.5)$){\t};
}
\foreach\c in {1,...,#1}{
\foreach\r in {1,...,\R}{
\pgfmathtruncatemacro\s{2^(#1-\c)};
\pgfmathtruncatemacro\a{mod(ceil(\r/\s),2)};
\def\tval{\F};
\ifnum\a<1
\def\tval{\T};
\fi
\node[]at(v\c|-tv\r){\tval};
}}
\end{tikzpicture}
}
\newcommand{\TruthTableThree}[8]{
\begin{tikzpicture}[x=0.65cm,y=0.375cm,baseline=-3pt]
\pgfmathtruncatemacro\R{2^(#1)};
\foreach\x[count=\i]in{#2}{
\node[](v\i)at(0.7*\i,0){\math{\x}};
}
\node[anchor=west](prop1)at($(v#1)+(0.5,0)$){#3};
\node[anchor=west](prop2)at($(prop1.east)+(0.5,0)$){#5};
\node[anchor=west](prop3)at($(prop2.east)+(0.5,0)$){#7};
\draw($(v1.west)+(0,-0.5)$)--($(prop3.east)+(0,-0.5)$);
\draw($0.5*(v#1.east)+0.5*(prop1.west)+(0,0.5)$)--($0.5*(v#1.east)+0.5*(prop1.west)+(0,-\R-0.5)$);
\foreach\t[count=\i] in{#4}{
\node[](tv\i)at($(prop1.south)+(0,-\i+0.5)$){\t};
}
\foreach\t[count=\i] in{#6}{
\node[](tv\i)at($(prop2.south)+(0,-\i+0.5)$){\t};
}
\foreach\t[count=\i] in{#8}{
\node[](tv\i)at($(prop3.south)+(0,-\i+0.5)$){\t};
}
\foreach\c in {1,...,#1}{
\foreach\r in {1,...,\R}{
\pgfmathtruncatemacro\s{2^(#1-\c)};
\pgfmathtruncatemacro\a{mod(ceil(\r/\s),2)};
\def\tval{\F};
\ifnum\a<1
\def\tval{\T};
\fi
\node[]at(v\c|-tv\r){\tval};
}}
\end{tikzpicture}
}
\newcommand{\venntwo}[2]{
\def\firstcircle{(0,0) circle (1.5cm)}
\def\secondcircle{(0:2cm) circle (1.5cm)}
\def\universal{(-2,-2) rectangle (4,2)}
\foreach\x[count=\i] in {#2}{
\ifnum\i=1
\fill[fill=\x] \universal;
\fi
\ifnum\i=2
\fill[fill=\x] \firstcircle;
\fi
\ifnum\i=3
\fill[fill=\x] \secondcircle;
\fi
\ifnum\i=4
\begin{scope}
\clip\firstcircle;
\fill[fill=\x] \secondcircle;
\end{scope}
\fi
}
\foreach\x[count=\i] in {#1}{
\ifnum\i=1
\draw\firstcircle node[scale=0.9,left=-1pt]{\x};
\fi
\ifnum\i=2
\draw\secondcircle node[scale=0.9,right=-1pt]{\x};
\fi
}
\draw\universal;
}
\newcommand{\vennthree}[2]{
\def\firstcircle{(0,0) circle (1.5cm)}
\def\secondcircle{(60:2cm) circle (1.5cm)}
\def\thirdcircle{(0:2cm) circle (1.5cm)}
\def\universal{(-2,-2) rectangle (4,3.7321)}
\foreach\x[count=\i] in {#2}{
\ifnum\i=1
\fill[fill=\x] \universal;
\fi
\ifnum\i=2
\fill[fill=\x] \firstcircle;
\fi
\ifnum\i=3
\fill[fill=\x] \secondcircle;
\fi
\ifnum\i=4
\fill[fill=\x] \thirdcircle;
\fi
\ifnum\i=5
\begin{scope}
\clip\firstcircle;
\fill[fill=\x] \secondcircle;
\end{scope}
\fi
\ifnum\i=6
\begin{scope}
\clip\firstcircle;
\fill[fill=\x] \thirdcircle;
\end{scope}
\fi
\ifnum\i=7
\begin{scope}
\clip\secondcircle;
\fill[fill=\x] \thirdcircle;
\end{scope}
\fi
\ifnum\i=8
\begin{scope}
\clip\firstcircle;
\clip\secondcircle;
\fill[fill=\x] \thirdcircle;
\end{scope}
\fi
}
\foreach\x[count=\i] in {#1}{
\ifnum\i=1
\draw\firstcircle node[scale=0.9,left=-1pt]{\x};
\fi
\ifnum\i=2
\draw\secondcircle node[scale=0.9,above=-1pt]{\x};
\fi
\ifnum\i=3
\draw\thirdcircle node[scale=0.9,right=-1pt]{\x};
\fi
}
\draw\universal;
}
\newcommand{\mat}[1]{{\ensuremath{\mathrm{#1}}}}
\newcommand{\vect}[1]{{\ensuremath{\mathbf{#1}}}}
\newcommand{\x}{\ensuremath{x}}
\newcommand{\bb}{\vect{b}}
\newcommand{\cc}{\vect{c}}
\newcommand{\ee}{\vect{e}}
\newcommand{\qq}{\vect{q}}
\newcommand{\xx}{\vect{x}}
\newcommand{\uu}{\vect{u}}
\newcommand{\yy}{\vect{y}}
\newcommand{\zz}{\vect{z}}
\newcommand{\vv}{\ensuremath{\mathbf{v}}}
\DeclareMathSymbol{\N}{\mathbin}{AMSb}{"4E}
\DeclareMathSymbol{\Z}{\mathbin}{AMSb}{"5A}
\DeclareMathSymbol{\R}{\mathbin}{AMSb}{"52}
\DeclareMathSymbol{\Q}{\mathbin}{AMSb}{"51}
\DeclareMathSymbol{\I}{\mathbin}{AMSb}{"49}
\DeclareMathSymbol{\C}{\mathbin}{AMSb}{"43}
\newcommand{\poly}{\ensuremath{\textrm{poly}}}
\newcommand{\rank}{\ensuremath{\textrm{rank}}}
\newcommand{\transp}{\ensuremath{^{\text{\textsc{t}}}}}
\newcommand{\trace}{{\textrm{\textup{trace}}}}
\DeclareMathSymbol{\Prob}{\mathord}{AMSb}{"50}
\DeclareMathSymbol{\Exp}{\mathord}{AMSb}{"45}
\def\0{{\bm{0}}}	
\newcommand{\ww}{\vect{w}}
\providecommand\remove[1]{}
\def\matA{\mat{A}}
\def\matB{\mat{B}}
\def\matC{\mat{C}}
\def\matD{\mat{D}}
\def\matI{\mat{I}}
\def\matP{\mat{P}}
\def\matQ{\mat{Q}}
\def\matR{\mat{R}}
\def\matS{\mat{S}}
\def\matT{\mat{T}}
\def\matU{\mat{U}}
\def\matV{\mat{V}}
\def\matX{\mat{X}}
\def\matZ{\mat{Z}}
\newcommand{\smallminus}{{\rule{3pt}{0.3pt}}}
\newcommand{\supminus}{^{\raisebox{2pt}{\hspace*{-0.3pt}\smallminus}}}
\def\matA{\mat{A}}
\def\matI{\mat{I}}
\def\matSig{\mat{\Sigma}}
\def\T{\textsc{t}}
\def\F{\textsc{f}}
\newcommand{\yesbox}{\begin{tikzpicture}[font=\small,baseline=-2.5pt]
\node[draw,inner sep=2pt,rounded corners=3pt] at (-1,0){\textsc{yes}};\end{tikzpicture}}
\def\nobox{\tikz[font=\small,baseline=-2.5pt]{
\node[draw,inner sep=2pt,rounded corners=3pt] at (-1,0){\textsc{no}};}}
\newsavebox\YN
\newsavebox\YBOX
\newsavebox\NBOX
\newcounter{prooftemplatenum}
\DeclareMathSymbol{\Prob}{\mathbin}{AMSb}{"50}
\DeclareMathSymbol{\Exp}{\mathbin}{AMSb}{"45}
\title{Reduced Label Complexity For Tight \math{\ell_2} Regression}
\author{Alex Gittens\\
 gittea@rpi.edu\\
Computer Science Department\\
Rensselaer Ploytechnic Institute\\
110 8th Street, Troy, NY 12180, USA
  \and Malik Magdon-Ismail\\
 magdon@cs.rpi.edu\\
Computer Science Department\\
Rensselaer Ploytechnic Institute\\
110 8th Street, Troy, NY 12180, USA
}
\begin{document}

\maketitle

\begin{abstract}%
  \noindent
  Given data ${\rm X}\in\bb{R}^{n\times d}$ and labels $\yy\in\bb{R}^{n}$ the goal is find $\ww\in\bb{R}^d$ to minimize $\norm{{\rm X}\ww-\yy}^2$. We give a polynomial algorithm that, \emph{oblivious to $\yy$}, throws out $n/(d+\sqrt{n})$ data points and is a $(1+d/n)$-approximation to optimal in expectation. The motivation is tight approximation with reduced label complexity (number of labels revealed). We reduce label complexity by $\Omega(\sqrt{n})$. Open question: Can label complexity be reduced by $\Omega(n)$ with tight $(1+d/n)$-approximation?
\end{abstract}

\def\r#1{(\ref{#1})}
\section{Introduction}
\label{section:intro}

In an era of big data, upwards of 10 million data points is
not rare. However, labels are costly, especially if humans do
the
labeling. Nevertheless, we want to have and eat our cake. By this
we mean to enjoy the statistical benefits of big data while avoiding the
need for big labeling.

Let \math{\matX\in\R^{n\times d}}
be a data matrix whose rows
are the \math{n} data points, \math{\matX\transp=[\xx_1,\xx_2,\ldots,\xx_n]} and
let \math{\yy\in\R^n} be the corresponding labels,
\math{\yy\transp=[y_1,y_2,\ldots,y_n]}. Typically,
\math{\poly(d)\ll n\ll e^d}.
The age-old goal of \math{\ell_2} regression is to find \math{\ww_*\in\R^d}
satisfying
\mld{
  \norm{\matX\ww_*-\yy}^2\le\norm{\matX\ww-\yy}^2
  \qquad\text{for all \math{\ww\in\R^d}}.
  \label{eq:l2-reg}
}
We study the \emph{label complexity} of solving \r{eq:l2-reg}, the
number of labels in \math{\yy} that must be revealed
to approximate~\math{\ww_*}. To define what
``approximate \math{\ww_*}'' means,
suppose
\math{(\xx_1,y_1),\ldots,(\xx_n,y_n)} are
i.i.d. draws from some joint distribution
\math{D(\xx,y)}. The expected squared prediction error
of \math{\ww_*} approaches optimal, with a statistical error 
\math{O(d/n)}~\cite[Problem 3.11]{malik173}.
Since \math{\ww_*} is only accurate to within \math{O(d/n)}, it suffices
to approximate~\math{\ww_*} to within that same error. It is also necessary
to do so, otherwise the benefit of having big data is lost. This defines the
target approximation regime of interest in large-scale machine learning, one
of the primary consumers of regression.
We seek \math{(1+\epsilon)}-approximations in the regime
\math{\epsilon\le d/n}.
Allowing for randomness, \math{\ww} approximates \math{\ww_*} if
\mld{
  \Exp[\norm{\matX\ww-\yy}^2]\le(1+d/n)\norm{\matX\ww_*-\yy}^2.
  \label{eq:def-approx}
}
Via Markov's inequality, \r{eq:def-approx} implies
a \math{1+O(d/n)} approximation with constant probability.
We give a polynomial approximation algorithm achieving
\r{eq:def-approx} using 
fewer than \math{n} labels, specifically \math{\Omega(\sqrt{n})}
fewer labels. Before stating our result, let us survey the
landscape of tools available, highlighting the need for new tools because
existing methods cannot reduce label complexity in the regime
\math{\epsilon\le d/n}.
There are two settings, consistent regression where
\math{\matX\ww_*=\yy} and inconsistent regression.

{\bf Notation.}
The target matrix \math{\matX} is a fixed
\math{n\times d} real-valued full rank matrix with no zero-rows.
Typically, we will assume \math{\poly(d)\ll n\ll e^d}
when framing asymptotic
runtimes.
Uppercase roman (\math{\matA,\matB,\matC,\matX\ldots}) are 
matrices.
Lowercase bold  (\math{\vect{a},\bb,\cc,\xx,\yy,\zz,\ldots}) are vectors.
We write 
\math{\matX\transp=[\xx_1,\ldots,\xx_n]}, where \math{\xx_i\transp} is the
\math{i}th row of \math{\matX} (the data points).
The standard Euclidean
basis is \math{\ee_1,\ee_2,\ldots} (dimension implied from
context). $\matI_k$ is the $k \times k$ identity and
\math{[k]} is the set \math{\{1,\ldots,k\}}.

The SVD decomposes \math{\matX} into a product, 
\math{\matX=\matU\matSig\matV\transp}. The left-singular matrix
\math{\matU\in\R^{n\times d}} is orthogonal, \math{\matU\transp\matU=\matI_d}.
The \math{i}th leverage score is \math{\ell_i=\norm{\uu_i}^2}, where
\math{\uu_i} is the \math{i}th row of \math{\matU}.
The diagonal matrix \math{\matSig\in\R^{d\times d}_+}
contains the
singular values \math{\sigma_1\ge\sigma_2\ge\cdots\ge\sigma_d>0}.
The right-singular matrix \math{\matV\in\R^{d\times d}} is an orthogonal
rotation.
The SVD can
be computed in time \math{O(nd\min\{n,d\})}.

The Frobenius norm of \math{\matA} is 
\math{
  \norm{\matA}_F^2=\sum_{ij}\matA_{ij}^2=\trace(\matA\transp\matA)=
  \trace(\matA\matA\transp)=\sum_{i\in[d]}\sigma_i^2(\matA).}
The operator or
spectral norm of \math{\matA} is
\math{\norm{\matA}_2=\max_{\norm{\xx}=1}\norm{\matA\xx}=\sigma_1(\matA).}
The condition number of \math{\matX} is
\math{\kappa=\norm{\matA}_2\norm{\matA^{-1}}_2=\sigma_1/\sigma_d}.
The scaled condition number is
\math{\bar\kappa^2=\sum_{i}(\sigma_1/\sigma_d)^2}.

The pseudo-inverse 
\math{\matX^\dagger=(\matX\transp\matX)^{-1}
  \matX\transp=\matV\Sigma^{-1}\matU\transp} provides a solution to
\r{eq:l2-reg}, \math{\ww_*=\matX^\dagger\yy}. The symmetric operator
\math{\matX\matX^\dagger=\matU\matU\transp}
projects onto the column space of \math{\matX}.
For an orthogonal matrix \math{\matQ},
\math{\matQ^\dagger=\matQ\transp} and \math{(\matQ\transp)^\dagger=\matQ}.
We use \math{c,c_1,c_2,\ldots} to generically
denote absolute constants whose values may change with each instance.

\subsection{Consistent (Realizable) \math{\ell_2} Regression}

When \math{\matX\ww_*=\yy}, relative approximation to the optimal in-sample
error is undefined.
In this setting, we require relative
approximation to the optimal weights
\math{\ww_*}.
Pre-conditioning the randomized Kaczmarz algorithm in
\cite{StrohmerVershynin2009}
gives label complexity
\math{d\ln(n\kappa^2/d)} (recall \math{\kappa} is the
conditioning of \math{\matX}).
\begin{theorem}\label{theorem:consistent}
  Set \math{\vv=\bm0}. Independently sample an index \math{j\in[n]} 
  using probabilities
  \math{p_i=\norm{\uu_i}^2/d} for \math{i\in[n]}.
  Do this \math{r} times and for
  each sample perform the
  projective update
  \mld{
    \vv\gets \vv - \frac{\uu_j(\uu_j\transp\vv-y_j)}{\norm{\uu_j}^2}.
  }
  Set \math{\ww=\matV\matSig^{-1}\vv}. Then, for \math{r\ge d\ln(n\kappa^2/d)},
  \mld{
    \Exp[\norm{\ww-\ww_*}^2]\le
    \frac{d}{n}\norm{\ww_*}^2.
  }
\end{theorem}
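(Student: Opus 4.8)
The plan is to recognize the stated iteration as the randomized Kaczmarz method of \cite{StrohmerVershynin2009} applied to the consistent linear system $\matU\vv=\yy$, whose unique solution (as $\matU$ has full column rank) is $\vv_*=\matU\transp\yy$. First I would verify that $\vv_*$ is a genuine fixed point of every projective step. Realizability gives $\yy=\matX\ww_*=\matU\matSig\matV\transp\ww_*$, so $\yy$ lies in the column space of $\matU$ and $\matU\matU\transp\yy=\yy$; hence $\uu_j\transp\vv_*=(\matU\matU\transp\yy)_j=y_j$ for every $j$, and the update leaves $\vv_*$ invariant. Writing $\vv^{(k)}$ for the iterate after $k$ steps, the residual then evolves by an orthogonal projection, $\vv^{(k+1)}-\vv_*=(\matI_d-\uu_j\uu_j\transp/\norm{\uu_j}^2)(\vv^{(k)}-\vv_*)$.

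The core of the argument is a clean per-step contraction. Squaring the projection identity gives $\norm{\vv^{(k+1)}-\vv_*}^2=\norm{\vv^{(k)}-\vv_*}^2-(\uu_j\transp(\vv^{(k)}-\vv_*))^2/\norm{\uu_j}^2$. Taking expectation over the draw of $j$ with $p_j=\norm{\uu_j}^2/d$, the sampling weight cancels the denominator and I am left with $\tfrac1d\sum_j(\uu_j\transp\zz)^2=\tfrac1d\norm{\matU\zz}^2$ for $\zz=\vv^{(k)}-\vv_*$. Here orthonormality $\matU\transp\matU=\matI_d$ collapses $\norm{\matU\zz}^2$ to $\norm{\zz}^2$, yielding the recursion $\Exp[\norm{\vv^{(k+1)}-\vv_*}^2\mid\vv^{(k)}]=(1-\tfrac1d)\norm{\vv^{(k)}-\vv_*}^2$. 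Because $\matU$ is exactly orthonormal, the Strohmer--Vershynin contraction holds with equality, the scaled condition number of $\matU$ being exactly $d$. Iterating from $\vv^{(0)}=\bm0$ gives $\Exp[\norm{\vv^{(r)}-\vv_*}^2]\le(1-\tfrac1d)^r\norm{\vv_*}^2$.

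Finally I would transfer the bound from $\vv$-space to $\ww$-space and calibrate $r$. Since $\ww-\ww_*=\matV\matSig^{-1}(\vv^{(r)}-\vv_*)$ with $\matV$ orthogonal, one has $\norm{\ww-\ww_*}^2\le\sigma_d^{-2}\norm{\vv^{(r)}-\vv_*}^2$; and from $\ww_*=\matV\matSig^{-1}\vv_*$, i.e.\ $\vv_*=\matSig\matV\transp\ww_*$, one has $\norm{\vv_*}^2\le\sigma_1^2\norm{\ww_*}^2$. Combining, $\Exp[\norm{\ww-\ww_*}^2]\le\kappa^2(1-\tfrac1d)^r\norm{\ww_*}^2$. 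The elementary bound $1-\tfrac1d\le e^{-1/d}$ then shows that $r\ge d\ln(n\kappa^2/d)$ forces $\kappa^2(1-\tfrac1d)^r\le d/n$, which is the claim. The only real subtlety is the condition-number bookkeeping: mapping between $\vv$ and $\ww$ inflates the error by exactly $\kappa^2$, and the logarithmic iteration count is chosen precisely to absorb this factor while driving the geometric decay below $d/n$.
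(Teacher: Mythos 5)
Your proposal is correct and follows essentially the same route as the paper: the paper proves Theorem~\ref{theorem:consistent} by specializing its general pre-conditioned Kaczmarz analysis (Theorem~\ref{thm:kaczmarz} in Appendix~\ref{section:algo}) to exact leverage scores and the exact preconditioner \math{\matR=\matSig\matV\transp}, which yields precisely your per-step \math{(1-1/d)} contraction from the orthonormality of \math{\matU}, followed by the same \math{\kappa^2} inflation when mapping \math{\vv}-error to \math{\ww}-error and the same calibration of \math{r} against the geometric decay. Your direct computation is just this specialization written out explicitly, so the two arguments coincide in all essential steps.
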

\noindent
The algorithm in Theorem~\ref{theorem:consistent} requires at most
\math{r} labels, yielding the advertised label complexity.
Direct use of the
result in \cite{StrohmerVershynin2009} gives a label complexity
\math{\bar\kappa^2\ln(n/d)}, where \math{\bar\kappa} is the scaled
condition number, \math{d\le\bar\kappa^2\le1+(d-1)\kappa^2}.
Pre-conditioning brings the
condition number inside the log,
reducing the label complexity from
\math{O(\kappa^2 d\ln(n/d))} to \math{O(d\ln(\kappa^2 n/d))}.
An open question is whether one can remove
dependence on the conditioning all together.

The runtime in Theorem~\ref{theorem:consistent}
is the sum of \math{O(nd^2)} preprocessing to get the leverage
scores \math{\norm{\uu_i}^2} and the pre-conditioner
\math{\matV\matSig^{-1}}, \math{O(r\log n)} to sample the
\math{r} indices and \math{O(rd)} for the
\math{r} projective updates. The \math{O(nd^2)} preprocessing can be
prohibitive. Using the ideas in \cite{malik186},
one can use approximate fast pre-conditioning with
constant factor approximations to the leverage scores to
reduce the preprocessing runtime to \math{O(nd\ln n)}.
The label complexity increases by only a constant factor to
\math{cd\ln(n\kappa^2/d)}, but this
constant factor can be relevant to
practice.

Pre-conditioned SGD with importance sampling
for \math{\ell_p}-regression and minimizing strongly convex functions
has been investigated in some detail
\cite{YangChowReMahoney2016,NeedellWardSrebro2014,GorbunorovHanzelyRichtarik2020}.
Theorem~\ref{theorem:consistent} together with its efficient
extension using
fast approximate
pre-conditioning follows by leveraging ideas from
\cite{StrohmerVershynin2009,YangChowReMahoney2016,malik186}.
This is not a main contribution of
our paper. However,  for completeness,
we give the full analysis  (including identifying the various constants)
in
Appendix~\ref{section:algo}.

\subsection{Inconsistent (Unrealizable) \math{\ell_2} Regression}

Label complexity has received much attention,
especially
in areas such as active learning, \cite{Jacobs2021,MacKay1992}, and experimental design, ~\cite{pukelsheim2006optimal,wang2017computationally,allen2017near}, with
theoretical guarantees being rare on account of the adaptive
sampling of data, \cite{CastroNowak2008}.
There are three general approaches to label complexity.
\begin{enumerate}[label={(\alph*)},itemsep=0pt]
\item
  {\bf Throw away outliers} based on some form of influence
  weights,~\cite{PenaYohai1995}.
  While the practical gains can be considerable, as
  demonstrated in experiments,
all the labels are typically used in determining the outliers and 
theoretical guarantees are lacking.
The typical motivation for identifying outliers
is to improve the expected out-of-sample performance
by ``cleaning'' the data.
This concern is orthogonal to
the main goal of this work whose focus is to
minimize the in-sample
error without using all the in-sample labels.

\item {\bf Iteratively solve (\ref{eq:l2-reg}) using low iteration count.}
  If each iteration touches at most one point, the label complexity is bounded
  by the iteration count. Theorem~\ref{theorem:consistent} uses this approach.
  The state-of-the art in iteration count and efficiency is 
  fast approximate pre-conditioned CGD~\cite{RT08,AvronMaymounkovToledo2010}.
  A data set of size \math{4d^2} is subsampled to construct the preconditioner,
  and then \math{\kappa\ln(n/d)} iterations suffice  to satisfy
  \r{eq:def-approx}. However, the subsampling uses
  random projections to form linear combinations
  of all the data and labels, and each iteration uses all the labels.

  An alternative is to extend the algorithm in
  Theorem~\ref{theorem:consistent} using a
  fast approximate pre-conditioned SGD, as in \cite{YangChowReMahoney2016}.
  While the approach is promising, \math{\Omega(d\log(1/\epsilon)/\epsilon)}
  iterations are needed, resulting in too large a label complexity when 
  \math{\epsilon\le d/n}. 
  
\item
  {\bf Find
  a rich coreset,} a small set of points on which the (possibly reweighted)
  coreset-regression
  approximates the full data regression.
  The active learning paradigm~\cite{MacKay1992,CohnAtlasLadner1994,FreundSeungShamirTishby1997}
  adds
  one point at a time adaptively to the working coreset. 
  This adaptive sampling can exponentially reduce
  label complexity in classification
  from \math{d/\epsilon} to \math{d\log(d/\epsilon)}.
  However, the settings are very restricted,
  such as consistent (separable)
  homogeneous linear models with data uniform on the sphere,
  \cite{FreundSeungShamirTishby1997,SanguptaKalaiMonteleoni2005,BalcanBeygelzimerLangford2006,BalcanBroderZhang2007}. Even mild deviation from these settings
  can result in label complexity reverting to \math{d/\epsilon}
  \cite{Dasgupta2005}. As with outlier ejection,
  active learning in machine learning is focused on out-of-sample
  prediction error for a test distribution. Our focus is tight
  in-sample fit with minimum label complexity. To this end,
  one fast random projections  
  efficiently construct coresets of size \math{O(d/\epsilon)}, \cite{S2006},
  but these coresets are
  linear combinations of all the data. The motivation of random-projection
  coresets is speed, not label
  complexity.  
  Row-sampling according to
  leverage score probabilities \cite[Theorem 5]{DMM2008} uses a pure coreset
  of size \math{\Omega(d\log d/\epsilon^2)} to produce
  a \math{(1+\epsilon)}-approximator with constant probability.
  In a sequence of ensuing
  results
  using more refined approaches, this sample complexity has
  been reduced. First one can start with a constant factor approximation
  using \math{O(d\log d)} samples and improve that to a
  \math{(1+\epsilon)}-approximation using an additional
  \math{d/\epsilon} samples. This improves the result in
  \cite{DMM2008} to \math{O(d\log d+d/\epsilon)}~\cite{Mahoney2011}.
  The \math{d\log d} is unavoidable by a coupon collector argument. However,
  using a more subtle linear sample sparsification approach,
  \cite{ChenPrice2019} gets the row-sample complexity down to
  \math{O(d)} for a 2-approximation, which then gives an
  \math{O(d+d/\epsilon)} label complexity for
  a \math{(1+\epsilon)}-approximation in expectation, the current state of
  the art. An interesting result in \cite{DerezenskiWarmuth2018}
  uses volume sampling
  to obtain an unbiased \math{(d+1)}-approximation using
  \math{d} labels, assuming the data in \math{\matX} are in general
  position (there is no easy way to extend this analysis to sample more than
  \math{d} points). Volume sampling has also been used in matrix reconstruction
  \cite{DRVW2006}.
  The estimator in \cite{DerezenskiWarmuth2018} is unbiased, hence
  averaging
  gives a
  \math{(1+\epsilon)}-approximator with \math{O(d^2/\epsilon)} labels.
  \cite{DerezenskiWarmuth2018} emphasize that
  jointly sampling rows is essential for
  getting tight approximation, and then go on
  to give an efficient algorithm
  for reverse iterative volume sampling, improving on the volume sampling
  algorithms in
  \cite{DR2010,KuleszaTasker2012}. Note that only coresets constructed oblivious
  to the labels \math{\yy} can reduce label complexity,
  for example \cite{ChenPrice2019,DerezenskiWarmuth2018}.
\end{enumerate}
The prior results don't work in the
stringent \math{\epsilon\le d/n} regime,
since they imply label complexity
\math{n}. New tools are needed for this
regime.
We give a polynomial
algorithm to reduce label complexity by \math{\Omega(\sqrt{n})}.
Our algorithm throws away data while maintaining a provable
coreset (a combination of
approaches (a) and (c) above). The algorithm is based on the following new tools:
\begin{enumerate}[itemsep=0pt,label={(\roman*)}]
\item Tight analysis of the regression error obtained by solving the
  regression problem on an \emph{arbitrary} coreset obtained
  after throwing away
  \math{k} points.
\item A probabilistic argument showing that one can always
  throw away \math{\Omega(\sqrt{n})} points while achieving
  the target approximation error, reducing label complexity by
  \math{\Omega(\sqrt{n})}.
\item The probabilistic arguments use a counterintuitive
  sampling measure for sets of rows.
  To realize the reduced label complexity implied by (\rn{2}),
  we develop a polynomial rejection sampling algorithm to throw out a 
  set of size \math{\Omega(\sqrt{n})} while attaining the bound
  in
  \r{eq:def-approx}.
\end{enumerate}

\noindent {\bf Lower Bounds.}
Theorems~13 and 14 in \cite{malik190} give lower bounds on
\math{\yy}-agnostic coresets with
\math{1+d/n} approximation ratio. Deterministic 
\math{\yy}-agnostic coresets have at least \math{n-d} points
(label complexity cannot be reduced more than \math{d}).
Randomized \math{\yy}-agnostic coresets yielding
\math{1+d/n} approximation with constant probability have
at least \math{n/d} points, so label complexity cannot be reduced by more than
\math{cn}, where \math{c\sim (d-1)/d}.
Since \r{eq:def-approx} implies approximation with constant probability,
the maximum
reduction in label complexity one can hope for is \math{cn}.

\subsection{Our results}

Let \math{\matA} be a matrix formed from a \math{k}-subset of the rows in
\math{\matX}. That is,
\math{
  \matA=\matS\transp\matX,
}
where \math{\matS} is a row-sampling matrix whose columns are standard
basis vectors,
\math{
  \matS=
      [\ee_{i_1},\ee_{i_2},\ldots,\ee_{i_k}].
}
Recall that \math{\matX=\matU\matSig\matV\transp} and
let \math{\matU_\matA=\matS\transp\matU} be the
corresponding rows of \math{\matU}.
The partial projection matrix
\math{\matP_\matA} plays an important role in our algorithm,
\mld{
  \matP_\matA
  =
  \matA(\matX\transp\matX)^{-1}\matA\transp
  =
  \matU_\matA\matU_\matA\transp,
}
where the last expression follows from using the SVD of \math{\matX}, see
\r{eq:partial-proj}.
The 
\emph{influence} of the rows \math{\matA} is
\mld{
  p_\matA=\frac{1}{\cl{Z}}\frac{(1-\norm{\matP_\matA}_2)^2}{\norm{\matP_\matA}_2},
    \label{eq:intro:pA}
}
where \math{\cl{Z}=\sum_\matA(1-\norm{\matP_\matA}_2)^2/\norm{\matP_\matA}_2}.
Note that the influence does not depend on the labels \math{\yy}. Our main
result is Theorem~\ref{thm:k-points}, and the algorithm
accompanying  Theorem~\ref{thm:k-points} is simple to state.
Jointly sample \math{k} rows \math{\matA} to throw out, using the
probability distribution over \math{k}-subsets of the
rows in \math{\matX} given by the influences
\math{p_\matA} in \r{eq:intro:pA}. Let \math{\matX\supminus_\matA} be
the (deficient) data that remains after throwing out the \math{k} rows in
\math{\matA}, and let \math{\yy\supminus_\matA} be the corresponding
labels. Perform a simple regression on this reduced (deficient) data to
get regression weights \math{\ww\supminus_\matA}. Then,
Theorem~\ref{thm:k-points} states that
\mld{
    \Exp[\norm{\matX\ww_\matA\supminus-\yy}^2]
    \le
    \left(1+\frac{dk^2}{(n-dk)^2}\right)
    \norm{\matX\ww_*-\yy}^2.
    \label{eq-intro:main-theorem}
}
Note that the algorithm is oblivious to \math{\yy} and hence serves to
reduce the label complexity by \math{k} while delivering the approximation in
\r{eq-intro:main-theorem}.
The main tool in our analysis is Lemma~\ref{lem:leave-A-out-error}
which gives an exact
analysis of the regression obtained from throwing away
an arbitrary set of rows \math{\matA}.

When \math{k=1}, the algorithm
throws out one data point \math{\xx_i} using sampling probabilities
(influences) \math{p_i\propto(1-\ell_i)^2/\ell_i}.
By throwing out \math{(1/n)}th of the
information, one expects the error to grow correspondingly, by \math{1/n}.
A surprise from
\r{eq-intro:main-theorem} is that one can throw away
one data point and get only an \math{O(d/n^2)} error increase.
Prior algorithms that explicitly construct coresets 
can't guarantee such approximations for coreset
sizes smaller than \math{n}. This already breaks a barrier on what was
previously possible.
Setting
\math{{dk^2}/{(n-dk)^2}=d/n} proves that one can throw out
\math{k=n/(d+\sqrt{n})\in\Omega(\sqrt{n})} data points and get approximation
ratio \math{1+d/n}. In Section~\ref{sec:reduction-one} we prove the
result for \math{k=1} illustrating all the main ideas, which are then
generalized in
Section~\ref{sec:reducing-root-n}.

As it stands, \r{eq-intro:main-theorem} is an existence result,
unless one can efficiently sample \math{\matA} according
to \math{p_\matA}. The probabilities
\math{p_\matA} depend non-trivially
on \math{\matA} through the spectral norm of \math{\matU_\matA}, and there
is no obvious way to jointly sample rows using
such complicated probabilities.
In Section~\ref{sec:sampling} we give an algorithm to sample exactly from
the probabilities \math{p_\matA}. The runtime to generate one sample
\math{\matA} satisfying \r{eq-intro:main-theorem} is
\math{O(\mu(n+kd\min\{k,d\}))}, where \math{\mu} is the average
inverse leverage score, a measure of coherence,
\mld{
  \mu(\matX)=\frac{1}{n}\sum_{i=1}^n\frac{1}{\ell_i}.
}
(\math{\ell_i} are the leverage scores, \math{\ell_i=\norm{\uu_i}^2}.)
For near uniform leverage scores, \math{\mu\sim n/d}, and
the
runtime is \math{O(n^2/d)}.
Ideally, the sampling efficiency should not depend on the input.

The sampling algorithm uses two tools. The first is
Theorem~\ref{theorem:sampling} which is a simple way to sample using
probabilities \math{p_\matA} that
can be written as a sum of some function over the
rows of \math{\matA}, for example sampling according to
Frobenius norms, \math{p_\matA\propto\norm{\matA}^2_F}.
Our sampling probabilities cannot be written
as a sum over rows, which leads to
our second idea of carefully bounding the
sampling probabilities so that we can use Theorem~\ref{theorem:sampling}
within a rejection sampling framework. 

{\bf Remainder of the paper.}
Next, we briefly discuss
some open questions and promising directions. We then proceed to
the detailed statement of results and proofs.

\subsection{Discussion}

Our result for the special case \math{k=1} 
highlights the need for new tools when the approximation regime
is stringent. Constructing a coreset from scratch
via some form
of sparsification  won't work.
Carefully throwing away data does work.
It is instructive to see what our result in \r{eq-intro:main-theorem}
implies for
a \math{(1+\epsilon)}-approximation in the more relaxed setting
where \math{\epsilon} is a
(small) constant. Setting \math{dk^2/(n-dk)^2=\epsilon}
gives \math{k=n/(d+\sqrt{d/\epsilon})}, so our algorithm throws away
\math{O(n/d)} data, retaining a coreset proportional to
\math{n}. This is much worse than the coreset construction
algorithms based on sparsification which only need to retain
\math{O(d/\epsilon)} points. Coreset construction is better for relaxed
approximation and data rejection is better for tight approximation.
It is not unusual for different
regimes to require different techniques. However it is an open question
whether data rejection can compete with
coreset construction even for relaxed approximation.

Our algorithm throws out \math{\Omega(\sqrt{n})}
data and provably gets a \math{(1+d/n)}-approximation.
There are reasons to suspect that one can throw out
\math{cn} data points and get \math{(1+d/n)}-approximation.
\begin{inparaenum}[(i)]
  \item The lower bound suggests one only needs to retain
    \math{n/d} data points, hence throwing out \math{(d-1)n/d}.
  \item
    If one can repeatedly
    throw out one point with the \math{k=1} result continuing to hold in
    a chaining fashion, one can throw out proportional
    to \math{n} data points (see the comments
    after Theorem~\ref{thm:one-point}).
    Unfortunately,
    the chaining analysis, being adaptive, is difficult.
\end{inparaenum}

Lemma~\ref{lem:leave-A-out-error} is an exact 
leave-\math{\matA}-out result. Our analysis then bounds
\math{(\matA\ww_*-\yy_\matA)\transp
    \matQ
    (\matA\ww_*-\yy_\matA)} by
\math{\norm{\matQ}_2\norm{\matA\ww_*-\yy_\matA}^2}. This is loose because
it does not exploit the coordination between the
residual \math{\matA\ww_*-\yy_\matA} and \math{\matQ}. In
special cases, e.g. \math{d=1}, one can
exploit this coordination
to throw out \math{cn} data points
and get \math{(1+d/n)}-approximation, matching the
lower bound. Hence, a more subtle analysis
could resolve our main open question of whether one can throw out
\math{cn} points and get \math{(1+d/n)}-approximation.
We used a simple regression for inference on the deficient data.
A different inference algorithm might produce stronger results,
for example a weighted
regression as is used in the coreset construction.
Or, an all together new approach is needed.

The (oblivious to \math{\yy})
influence probabilities in \r{eq:intro:pA}
identify the ``useless'' rows, akin to outlier detection.
The innovation in our algorithm is that the useless rows are
\emph{jointly} sampled. For coreset construction, joint
sampling of rows is essential to get the tightest bounds, and the
same is likely true for identifying the useless rows.
Thus, the probablities \math{p_\matA} in \r{eq:intro:pA}
may be of general interest to machine learning.
Can one more efficiently
sample according to complex probabilities like \math{p_\matA}? Or,
are there approximations to \math{p_\matA} that can give the same
regression accuracy but are easier to sample from?
How does the bound change if
approximate sampling probabilities are used instead of \math{p_\matA}?

This work addresses the transductive setting,
where one simply wishes to obtain the optimal in-sample weights
\math{\ww_*}, but using fewer labels. In the inductive setting,
one is also interested in the expected prediction error on new data
\math{(\xx,y)} drawn from some distribution. It would be interesting
to understand how rejection performs in the inductive setting.

\section{Reducing Label Complexity By One}
\label{sec:reduction-one}

Recall that  a \math{k}-subset of the rows in
\math{\matX} is 
\math{\matA=\matS\transp\matX,}
where
\math{\matS=[\ee_{i_1},\ee_{i_2},\ldots,\ee_{i_k}].}
Let \math{\yy_\matA} be the corresponding \math{y}-values for the data in
\math{\matA}, \math{\yy_\matA=\matS\transp\yy}.
Using the notation in \cite[Section 4.3]{malik173},
define \math{\matX_{\matA}\supminus} as the
deficient dataset with the rows in
\math{\matA} removed. Similarly, we
have \math{\yy_\matA\supminus}, the corresponding \math{y}-values for the
deficient data and \math{\ww_{\matA}\supminus}, the regression weights
obtained from the deficient data,
\mld{
  \ww_{\matA}\supminus
  =
  \argmin_\ww\norm{\matX_{\matA}\supminus\ww-\yy_\matA\supminus}^2.
}
The partial projection matrix
\math{\matP_\matA} has an important role in our discussion,
\mld{
  \matP_\matA
  =
  \matA(\matX\transp\matX)^{-1}\matA\transp.
  \label{eq:partial-proj0}
}
Recall the SVD of \math{\matX}, \math{\matX=\matU\matSig\matV\transp}.
Let \math{\matU_\matA} be the rows in \math{\matU} corresponding to the
rows \math{\matA}. Then,
\mld{
  \matP_\matA=\matS\transp\matX(\matX\transp\matX)^{-1}\matX\transp\matS
  =
  \matS\transp\matU\matU\transp\matS
  =\matU_\matA\matU_\matA\transp,
\label{eq:partial-proj}}
and
hence \math{\norm{\matP_\matA}_2=\norm{\matU_\matA\matU_\matA\transp}_2\le1}.
Assume \math{\norm{\matP_\matA}_2<1}. This will be
without loss of generality because we never need to remove a set of
rows \math{\matA} with \math{\norm{\matP_\matA}_2=1}.
Also assume \math{0<\norm{\matP_\matA}_2} because if
\math{0=\norm{\matP_\matA}_2} for any \math{\matA}, those
rows in \math{\matA}
are all \math{\bm{0}} and can be thrown out.
We need the in-sample error
for the deficient weights \math{\ww_A\supminus} on the full data
\math{\matX}. This is the content of the next lemma,
\begin{lemma}\label{lem:leave-A-out-error}
  Let \math{\ww_*} be the weights from the full regression,
  \math{\ww_*=\argmin_\ww\norm{\matX\ww-\yy}^2}.
  \mld{
    \norm{\matX\ww_\matA\supminus-\yy}^2
    =
    \norm{\matX\ww_*-\yy}^2
    +
    (\matA\ww_*-\yy_\matA)\transp
    \matQ
    (\matA\ww_*-\yy_\matA),
  }
  where,
  assuming \math{\norm{\matP_\matA}_2<1},
  \math{\matQ=(\matI_k-\matP_\matA)^{-1}\matP_\matA(\matI_k-\matP_\matA)^{-1}=(\matI_k-\matP_\matA)^{-2}-
      (\matI_k-\matP_\matA)^{-1}.}
\end{lemma}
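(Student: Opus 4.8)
The plan is to reduce the claimed identity to a purely algebraic statement about how the deficient solution differs from the full one. Write $\mat{M}=\matX\transp\matX$. First I would invoke the orthogonality of the full residual: since $\matX\ww_*-\yy$ is orthogonal to the column space of $\matX$ while $\matX(\ww_\matA\supminus-\ww_*)$ lies in that column space, the Pythagorean theorem gives
$$\norm{\matX\ww_\matA\supminus-\yy}^2 = \norm{\matX\ww_*-\yy}^2 + \norm{\matX(\ww_\matA\supminus-\ww_*)}^2,$$
so the whole task reduces to showing the extra term $\norm{\matX(\ww_\matA\supminus-\ww_*)}^2=(\ww_\matA\supminus-\ww_*)\transp\mat{M}(\ww_\matA\supminus-\ww_*)$ equals $(\matA\ww_*-\yy_\matA)\transp\matQ(\matA\ww_*-\yy_\matA)$.

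Next I would compare the two sets of normal equations. Deleting the rows of $\matA$ splits the Gram matrix and the cross term additively, $(\matX_\matA\supminus)\transp\matX_\matA\supminus = \mat{M}-\matA\transp\matA$ and $(\matX_\matA\supminus)\transp\yy_\matA\supminus = \matX\transp\yy-\matA\transp\yy_\matA$. Substituting $\matX\transp\yy=\mat{M}\ww_*$ into the deficient normal equations and rearranging yields
$$\ww_\matA\supminus-\ww_* = (\mat{M}-\matA\transp\matA)^{-1}\matA\transp(\matA\ww_*-\yy_\matA).$$
The inverse exists precisely because $\norm{\matP_\matA}_2<1$: the nonzero eigenvalues of $\mat{M}^{-1/2}\matA\transp\matA\mat{M}^{-1/2}$ coincide with those of $\matP_\matA=\matA\mat{M}^{-1}\matA\transp$, so $\mat{M}-\matA\transp\matA$ is positive definite under the standing assumption. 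Writing $\matB=(\mat{M}-\matA\transp\matA)^{-1}$, the extra term becomes $(\matA\ww_*-\yy_\matA)\transp\matA\matB\mat{M}\matB\matA\transp(\matA\ww_*-\yy_\matA)$, and it remains to prove the matrix identity $\matA\matB\mat{M}\matB\matA\transp=\matQ$.

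This last identity is the computational heart. Applying the Sherman--Morrison--Woodbury identity to $\matB$ gives $\matB=\mat{M}^{-1}+\mat{M}^{-1}\matA\transp(\matI_k-\matP_\matA)^{-1}\matA\mat{M}^{-1}$, from which
$$\matA\matB\matA\transp=\matP_\matA+\matP_\matA(\matI_k-\matP_\matA)^{-1}\matP_\matA=(\matI_k-\matP_\matA)^{-1}\matP_\matA,$$
using the telescoping $\matI_k+(\matI_k-\matP_\matA)^{-1}\matP_\matA=(\matI_k-\matP_\matA)^{-1}$. Call this matrix $\matR$. The key trick to absorb the middle $\mat{M}$ is to write $\mat{M}=\matB^{-1}+\matA\transp\matA$, so that
$$\matA\matB\mat{M}\matB\matA\transp=\matA\matB\matA\transp+(\matA\matB\matA\transp)^2=\matR+\matR^2=\matR(\matI_k+\matR).$$
Applying $\matI_k+\matR=(\matI_k-\matP_\matA)^{-1}$ once more collapses this to $(\matI_k-\matP_\matA)^{-1}\matP_\matA(\matI_k-\matP_\matA)^{-1}=\matQ$, matching both stated forms since $(\matI_k-\matP_\matA)^{-1}\matP_\matA=(\matI_k-\matP_\matA)^{-1}-\matI_k$.

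The argument is mechanical linear algebra once organized this way; there is no genuinely hard step. The one place that demands care is the collapse of the central $\mat{M}$: the substitution $\mat{M}=\matB^{-1}+\matA\transp\matA$ is exactly what converts the three-factor product into $\matR+\matR^2$ and makes the second telescoping with $\matI_k+\matR$ go through. I would also be careful to record the invertibility of $\mat{M}-\matA\transp\matA$ explicitly, since every inverse above — and in particular the Woodbury step — relies on the standing assumption $\norm{\matP_\matA}_2<1$.
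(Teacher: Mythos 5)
Your proof is correct, and while it uses the same two basic ingredients as the paper (orthogonality of the full residual and the Woodbury identity), it organizes them in a genuinely different way. The paper applies Woodbury \emph{first}, to expand \math{\ww_\matA\supminus} explicitly as \math{\ww_*} plus two correction terms, and then expands \math{\norm{\matX\ww_\matA\supminus-\yy}^2} as the squared norm of a sum of three vectors, using orthogonality to kill the cross terms with the residual and collecting four surviving terms into the quadratic form; the residual combination \math{\matA\ww_*-\yy_\matA} only emerges at the very end of that bookkeeping. You instead invoke orthogonality \emph{first}, as a Pythagorean decomposition, which reduces everything to \math{\norm{\matX(\ww_\matA\supminus-\ww_*)}^2}; then the compact formula \math{\ww_\matA\supminus-\ww_*=(\mat{M}-\matA\transp\matA)^{-1}\matA\transp(\matA\ww_*-\yy_\matA)} falls out of simply subtracting the two sets of normal equations, with no Woodbury needed and with the residual factor appearing structurally rather than emergently. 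Woodbury is then confined to the purely algebraic identity \math{\matA\matB\mat{M}\matB\matA\transp=\matQ}, where your absorption trick \math{\mat{M}=\matB^{-1}+\matA\transp\matA} converts the three-factor product into \math{\matR+\matR^2} and the telescoping \math{\matI_k+\matR=(\matI_k-\matP_\matA)^{-1}} finishes it. Your route buys two things: it eliminates the four-term expansion entirely, and it makes the invertibility of \math{\mat{M}-\matA\transp\matA} an explicit, verified consequence of \math{\norm{\matP_\matA}_2<1} (via the eigenvalue identification with \math{\matP_\matA}), whereas the paper only addresses invertibility of \math{\matI_k-\matP_\matA} and leaves the Gram-matrix side implicit. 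The paper's route is a single-pass computation; yours isolates the reusable matrix identity, which is arguably the cleaner statement to carry into the \math{k}-row analysis of Theorem~\ref{thm:k-points}.
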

\begin{proof}
  Note that
  \math{{\matX_{\matA}\supminus}\transp\matX_{\matA}\supminus
    =
    \matX\transp\matX-\matA\transp\matA},
  and
  \math{{\matX_{\matA}\supminus}\transp\yy_\matA\supminus
    =
    \matX\transp\yy-\matA\transp\yy_\matA}. The deficient weights
  \math{\ww_\matA\supminus} are
  \mld{
    \ww_\matA\supminus
    =
    ({\matX_{\matA}\supminus}\transp\matX_{\matA}\supminus)^{-1}
    {\matX_{\matA}\supminus}\transp\yy_\matA\supminus
    =
   ({\matX_{\matA}\supminus}\transp\matX_{\matA}\supminus)^{-1}
    (\matX\transp\yy-\matA\transp\yy_\matA)
  .
  }  
  Using \math{\ww_*=(\matX\transp\matX)^{-1}\matX\transp\yy} and
  the Woodbury matrix inversion identity~\cite{Woodbury1950},
  \eqar{
    \ww_\matA\supminus
    &=&
    (\matX\transp\matX-\matA\transp\matA)^{-1}
    (\matX\transp\yy-\matA\transp\yy_\matA)
    \\
    &=&
    \left[(\matX\transp\matX)^{-1}+
      (\matX\transp\matX)^{-1}\matA\transp(\matI_k-\matP_\matA)^{-1}
      \matA(\matX\transp\matX)^{-1}
      \right]
    (\matX\transp\yy-\matA\transp\yy_\matA)
    \\
    &=&
    \ww_*+(\matX\transp\matX)^{-1}\matA\transp(\matI_k-\matP_\matA)^{-1}
    \matA\ww_*
    -    (\matX\transp\matX)^{-1}\matA\transp(\matI_k-\matP_\matA)^{-1}\yy_\matA
    ,
  }
  where the last expression follows by multiplying out the previous
  expression and using
  \mld{
    (\matI_k-\matP_\matA)^{-1}\matP_\matA
    =
    (\matI_k-\matP_\matA)^{-1}(\matP_\matA-\matI+\matI)=(\matI_k-\matP_\matA)^{-1}-\matI.
    \label{eq:proj-identity}}
  Note, \math{\ww_\matA\supminus} is well defined since
  \math{\matI_k-\matP_\matA} is invertible because
  \math{\norm{\matP_\matA}_2<1}. Consider
  \math{\norm{\matX\ww_\matA\supminus-\yy}^2},
  \mld{
    \norm{\matX\ww_\matA\supminus-\yy}^2
    =
    \norm{\matX\ww_*-\yy+\matX(\matX\transp\matX)^{-1}\matA\transp(\matI_k-\matP_\matA)^{-1}
    \matA\ww_*
    -\matX
    (\matX\transp\matX)^{-1}\matA\transp(\matI_k-\matP_\matA)^{-1}\yy_\matA}^2.
  }
  We get the norms-squared of each of the three terms, plus the cross terms.
  The residual \math{\matX\ww_*-\yy} is orthogonal to the columns
  of \math{\matX}, that is \math{(\matX\ww_*-\yy)\transp\matX=\bm{0}}.
  Hence, only one of
  the cross terms is non-zero. After a little algebra, we get four terms,
  \eqar{
    \norm{\matX\ww_\matA\supminus-\yy}^2
    &=&
    \norm{\matX\ww_*-\yy}^2
    \nonumber
    \\
    &&
    +\ww_*\transp\matA\transp(\matI_k-\matP_\matA)^{-1}\matP_\matA(\matI_k-\matP_\matA)^{-1}\matA\ww_*
    \nonumber
    \\
    &&
    +\yy_\matA\transp(\matI_k-\matP_\matA)^{-1}\matP_\matA(\matI_k-\matP_\matA)^{-1}
    \yy_\matA
    \nonumber
    \\
    &&
    -2\ww_*\transp\matA\transp(\matI_k-\matP_\matA)^{-1}\matP_\matA(\matI_k-\matP_\matA)^{-1}\yy_\matA.
    \\
    &=&
    \norm{\matX\ww_*-\yy}^2
    +
    (\matA\ww_*-\yy_\matA)\transp
    (\matI_k-\matP_\matA)^{-1}\matP_\matA(\matI_k-\matP_\matA)^{-1}
    (\matA\ww_*-\yy_\matA).
    }
The alternate form for \math{\matQ} follows by using \r{eq:proj-identity}.
\end{proof}
\noindent
A special case of Lemma~\ref{lem:leave-A-out-error} is when \math{k=1}
(one row is
removed). The general case uses similar ideas. When \math{\matA} is just one
row, 
\math{\matA=\xx_i\transp}, and
\math{\matP_\matA=\uu_i\transp\uu_i=\norm{\uu_i}^2=\ell_i,}
the leverage score for the \math{i}th
row of \math{\matX} (norm-squared of the corresponding
row of the left-singular matrix).  Lemma~\ref{lem:leave-A-out-error} gives
\mld{
  \norm{\matX\ww_i\supminus-\yy}^2
  =
  \norm{\matX\ww_*-\yy}^2
  +
  \frac{\ell_i}{(1-\ell_i)^2}\norm{\xx_i\transp\ww_*-\yy_i}^2
}
Define the sampling probability
\mld{
  p_i=\frac{1}{\cl{Z}}\frac{(1-\ell_i)^2}{\ell_i},
}
where \math{\cl{Z}=\sum_i(1-\ell_i)^2/\ell_i}. Sample a row \math{i}
with probability
\math{p_i} to throw out. Notice that if \math{\ell_i=1} then this row will
never be thrown out,
consistent with our assumption that \math{\norm{\matP_\matA}_2<1}.
\begin{theorem}\label{thm:one-point}
  For any \math{\matX},
  pick row \math{i} to throw out with probability \math{p_i}. Then,
  \mld{
    \Exp[\norm{\matX\ww_i\supminus-\yy}^2]
    \le
    \left(1+\frac{d}{(n-d)^2}\right)
    \norm{\matX\ww_*-\yy}^2.
    }
\end{theorem}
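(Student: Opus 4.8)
The plan is to take expectations directly in the exact leave-one-out identity and to exploit a cancellation that the influence probabilities are engineered to produce. Writing \math{r_i=\xx_i\transp\ww_*-y_i} for the \math{i}th residual, so that \math{\norm{\matX\ww_*-\yy}^2=\sum_i r_i^2}, the \math{k=1} specialization of Lemma~\ref{lem:leave-A-out-error} already displayed above gives \math{\norm{\matX\ww_i\supminus-\yy}^2=\norm{\matX\ww_*-\yy}^2+\frac{\ell_i}{(1-\ell_i)^2}r_i^2}. First I would average this over \math{i} with the weights \math{p_i=(1-\ell_i)^2/(\cl{Z}\ell_i)} and use \math{\sum_i p_i=1} to obtain
$$
\Exp[\norm{\matX\ww_i\supminus-\yy}^2]
=
\norm{\matX\ww_*-\yy}^2+\sum_i p_i\,\frac{\ell_i}{(1-\ell_i)^2}\,r_i^2.
$$

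The crucial observation is that the product \math{p_i\cdot\frac{\ell_i}{(1-\ell_i)^2}} collapses to the constant \math{1/\cl{Z}}, independent of \math{i}; this is precisely why the influences are chosen proportional to \math{(1-\ell_i)^2/\ell_i}. Hence the error term is \math{\frac1{\cl{Z}}\sum_i r_i^2=\frac1{\cl{Z}}\norm{\matX\ww_*-\yy}^2}, and I get the \emph{exact} identity
$$
\Exp[\norm{\matX\ww_i\supminus-\yy}^2]
=
\left(1+\frac1{\cl{Z}}\right)\norm{\matX\ww_*-\yy}^2.
$$
It then suffices to show \math{1/\cl{Z}\le d/(n-d)^2}, equivalently \math{\cl{Z}\ge(n-d)^2/d}.

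For the lower bound on \math{\cl{Z}} I would expand \math{(1-\ell_i)^2/\ell_i=1/\ell_i-2+\ell_i} and sum, invoking the leverage-score identity \math{\sum_i\ell_i=\trace(\matU\transp\matU)=d}. This yields \math{\cl{Z}=\sum_i 1/\ell_i-2n+d}. Applying the AM--HM (equivalently Cauchy--Schwarz) inequality \math{\sum_i 1/\ell_i\ge n^2/\sum_i\ell_i=n^2/d} gives \math{\cl{Z}\ge n^2/d-2n+d=(n-d)^2/d}, which is exactly what is needed.

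There is no real obstacle once the cancellation is spotted; the only substantive inputs beyond the exact lemma are that the leverage scores lie in \math{(0,1)} and sum to \math{d}. The one thing to get right is the direction of the AM--HM step. It is worth emphasizing that the first displayed identity is exact, so all the slack in the theorem comes from AM--HM, which is tight precisely when every leverage score equals \math{d/n}; this also foreshadows why the uniform-leverage regime is the extremal case in the general \math{k} analysis.
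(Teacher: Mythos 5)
Your proof is correct, and its first half coincides exactly with the paper's: both take the expectation in the exact $k=1$ identity from Lemma~\ref{lem:leave-A-out-error}, observe that $p_i\cdot\ell_i/(1-\ell_i)^2$ collapses to the constant $1/\cl{Z}$, and thereby reduce the theorem to the bound $\cl{Z}\ge(n-d)^2/d$. Where you genuinely diverge is in proving that bound. The paper expands $\cl{Z}=\sum_i 1/\ell_i-2n+d$ just as you do, but then lower-bounds $\sum_i 1/\ell_i$ by a variational exchange argument: it posits a minimizer of $\sum_i 1/\ell_i$ subject to $\sum_i\ell_i=d$ and $0<\ell_i\le 1$, and shows that any two unequal coordinates can be replaced by their average, strictly decreasing the objective, so the minimizer must be uniform, $\ell_i=d/n$. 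You instead invoke AM--HM (equivalently Cauchy--Schwarz): $\sum_i 1/\ell_i\ge n^2/\sum_i\ell_i=n^2/d$, giving $\cl{Z}\ge n^2/d-2n+d=(n-d)^2/d$ in one line. The two routes encode the same extremal fact---uniform leverage scores minimize $\cl{Z}$---but yours buys something real: it sidesteps the existence of a minimizer, which the paper tacitly assumes even though its feasible set $\{0<\ell_i\le1,\ \sum_i\ell_i=d\}$ is not compact, and it uses only $\ell_i>0$ and $\sum_i\ell_i=d$, never the upper bound $\ell_i\le1$. Your closing observation that the displayed expectation is an exact identity, so that all slack in the theorem enters through AM--HM with equality precisely at uniform leverage scores, is also accurate and matches the role the uniform case plays in the paper's general-$k$ analysis (Theorem~\ref{thm:k-points}).
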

\noindent
Theorem~\ref{thm:one-point} implies one can
throw out at least one point and get a
\math{(1+O(d/n^2))}-approximation. Recall that our
target approximation \math{1+d/n}.
\begin{proof}
  Using the definition of  \math{p_i} and
  \math{\sum_i\norm{\xx_i\transp\ww_*-\yy}^2=\norm{\matX\ww_*-\yy}^2}
  gives
  \mld{
    \Exp[\norm{\matX\ww_i\supminus-\yy}^2]
    =
    \left(1+\frac{1}{\cl{Z}}\right)
    \norm{\matX\ww_*-\yy}^2.
  }
  The result follows if \math{\cl{Z}\ge(n-d)^2/d}, which we now
  prove.
  \eqar{
    \cl{Z}
    &=&
    \sum_{i=1}^n\frac{(1-\ell_i)^2}{\ell_i}
    =
    \sum_{i=1}^n\frac{1}{\ell_i}-2+\ell_i.
  }
  Since \math{\sum_i\ell_i=\sum_i\norm{\uu_i}^2=d},
  \math{\cl{Z}=d-2n+\sum_i1/\ell_i}. Therefore, we wish to find the minimum
  possible value of \math{\sum_i1/\ell_i} subject to the constraint
  \math{0<\ell_i\le 1} and \math{\sum_i\ell_i=d}. Let us suppose that this
  minimum is attained at some values \math{\ell_{1*},\ldots,\ell_{n*}} and
  for some \math{i,j}, \math{\ell_{i*}<\ell_{j*}}. Suppose
  \math{\ell_{i*}=\ell-\varepsilon} and \math{\ell_{j*}=\ell+\varepsilon}.
  After some elementary algebra, one
  finds that replacing both \math{\ell_{i*}} and \math{\ell_{j*}} by
  \math{\ell} keeps their sum the same but strictly decreases the sum
  \math{1/\ell_{i*}+1/\ell_{j*}},
  \mld{
    \frac{1}{\ell-\varepsilon}+\frac{1}{\ell+\varepsilon}
    >
    \frac{2}{\ell}.
  }
  This contradicts
  \math{\ell_{1*},\ldots,\ell_{n*}} attaining the minimum for \math{\cl{Z}}
  which means the minimum possible value for \math{\cl{Z}} is attained when
  \math{\ell_{1*}=\ell_{2*}=\cdots=\ell_{n*}=d/n}. This gives
  \mld{\cl{Z}\ge n\times\frac{(1-d/n)^2}{(d/n)}=\frac{(n-d)^2}{d},}
  concluding the proof.
\end{proof}
\noindent
{\bf Comment.} Throwing out just one data point looks trivial,
but the result is surprising. A data point contains \math{O(1/n)}
of the information yet throwing one out increases the error by only
\math{O(1/n^2)}.

\noindent
{\bf Comment.} The same qualitative relative error approximation
\math{1+cd/(n-d)^2} continues to hold given relative error
approximations to \math{\ell_i} and \math{(1-\ell_i)}. A fast relative error
approximation to \math{\ell_i} is given in~\cite{malik186}.
Can one can get a fast
relative error approximation to \math{(1-\ell_i)}?

\noindent
{\bf Comment.} The algorithm is oblivious to \math{\yy} as it
should be if we are to reduce label complexity.

\noindent
{\bf Comment.} Chaining this approximation factor by 
throwing out one point at a time gives
\mld{
  \left(1+\frac{d}{(n-d)^2}\right)
  \left(1+\frac{d}{(n-d-1)^2}\right)
  \left(1+\frac{d}{(n-d-2)^2}\right)
  \cdots
  \left(1+\frac{d}{(n-d-k+1)^2}\right).
}
Using \math{1+x\le e^x}, this product is at most
\mld{
  \exp\left(d\sum_{i=0}^{k-1}\frac{1}{(n-d-i)^2}\right).
}
Bounding the sum by an integral gives
\mld{
  \sum_{i=0}^{k-1}\frac{1}{(n-d-i)^2}\le\int_0^kdx\ \frac{1}{(n-d-x)^2}
  =\frac{k}{(n-d)(n-d-k)}.}
Setting \math{k=(n-d)/2} 
gives an approximation ratio
\math{\exp\left({d}/{(n-d)}\right)\approx 1+{d}/{(n-d)}}
after throwing away about half the data.
Such a result ought to be
possible, but we don't have a proof for any such
chaining approach.
Our general analysis in the next section only throws out
\math{\Theta(\sqrt{n})} points.

\section{Reducing Label Complexity by \math{\Omega(\sqrt{n})}.}
\label{sec:reducing-root-n}

The goal in this section is to show that one can reduce
label complexity by
\math{\Omega(\sqrt{n})} while attaining the target approximation ratio of
\math{1+d/n}. We prove that such a set of rows \math{\matA}
exists and give a polynomial algorithm to find \math{\matA}.
The starting point
is Lemma~\ref{lem:leave-A-out-error}, which implies
\mld{
  \norm{\matX\ww_\matA\supminus-\yy}^2
    \le
    \norm{\matX\ww_*-\yy}^2
    +
    \norm{(\matI_k-\matP_\matA)^{-1}\matP_\matA
  (\matI_k-\matP_\matA)^{-1}}_2
    \norm{\matA\ww_*-\yy_\matA}^2
    .
}
Let \math{0\le\lambda<1} be an eigenvalue of
\math{\matP_\matA}. Then, 
\mld{
    {\lambda}/{(1-\lambda)^2}\ge0
}
is an eigenvalue of \math{(\matI_k-\matP_\matA)^{-1}\matP_\matA
  (\matI_k-\matP_\matA)^{-1}}.
We see that \math{(\matI_k-\matP_\matA)^{-1}\matP_\matA
  (\matI_k-\matP_\matA)^{-1}} is positive, hence
\math{\norm{
      (\matI_k-\matP_\matA)^{-1}\matP_\matA
  (\matI_k-\matP_\matA)^{-1}}_2}
is given by its top eigenvalue, which is obtained from the
top eigenvalue of \math{\matP_\matA}, which in turn is
\math{\norm{\matP_\matA}_2} since \math{\matP_\matA} is non-negative. Hence,
\mld{
  \norm{\matX\ww_\matA\supminus-\yy}^2
    \le
    \norm{\matX\ww_*-\yy}^2
    +
    \frac{\norm{\matP_\matA}_2}{(1-\norm{\matP_\matA}_2)^2}
    \norm{\matA\ww_*-\yy_\matA}^2
    .
    \label{eq:leave-A-out}
}
Define a sampling probability for a subset of rows \math{\matA} by
\mld{
  p_\matA=\frac{1}{\cl{Z}}\frac{(1-\norm{\matP_\matA}_2)^2}{\norm{\matP_\matA}_2},
    \label{eq:pA}
}
where \math{\cl{Z}=\sum_\matA(1-\norm{\matP_\matA}_2)^2/\norm{\matP_\matA}_2}. Sample the set of \math{k} rows \math{\matA} to throw out
with probability
\math{p_\matA}.
Note that we never throw out an \math{\matA} with
\math{\norm{\matP_\matA}_2=1},
consistent with assuming \math{\norm{\matP_\matA}_2<1}.
\begin{theorem}\label{thm:k-points}
  For any \math{\matX},
  pick \math{k} rows \math{\matA}
  to throw out with probability \math{p_\matA}. Then,  for \math{k<n/d},
  \mld{
    \Exp[\norm{\matX\ww_\matA\supminus-\yy}^2]
    \le
    \left(1+\frac{dk^2}{(n-dk)^2}\right)
    \norm{\matX\ww_*-\yy}^2.
    }
\end{theorem}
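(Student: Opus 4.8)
The plan is to follow the template of the proof of Theorem~\ref{thm:one-point}: average the deterministic bound \r{eq:leave-A-out} over the random set $\matA\sim p_\matA$, and then reduce the entire statement to a single lower bound on the normalizer $\cl{Z}$.

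First I would take the expectation in \r{eq:leave-A-out}. The probability $p_\matA$ in \r{eq:pA} is chosen so that $p_\matA\cdot\norm{\matP_\matA}_2/(1-\norm{\matP_\matA}_2)^2=1/\cl{Z}$, so the residual coefficient collapses to a constant and
\[
  \Exp[\norm{\matX\ww_\matA\supminus-\yy}^2]
  \le
  \norm{\matX\ww_*-\yy}^2
  +\frac{1}{\cl{Z}}\sum_\matA\norm{\matA\ww_*-\yy_\matA}^2 .
\]
Next I would evaluate the sum by symmetry. Since $\norm{\matA\ww_*-\yy_\matA}^2=\sum_{i\in\matA}(\xx_i\transp\ww_*-y_i)^2$ and each index $i$ appears in exactly $\binom{n-1}{k-1}$ of the $k$-subsets, extending the sum to all $k$-subsets (which only adds the nonnegative residuals of the excluded sets having $\norm{\matP_\matA}_2=1$, preserving the upper bound) gives $\sum_\matA\norm{\matA\ww_*-\yy_\matA}^2=\binom{n-1}{k-1}\norm{\matX\ww_*-\yy}^2$. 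This reduces the theorem to proving $\cl{Z}\ge\binom{n-1}{k-1}(n-dk)^2/(dk^2)$.

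The hard part is this lower bound on $\cl{Z}=\sum_\matA f(\norm{\matP_\matA}_2)$ with $f(x)=(1-x)^2/x=\tfrac{1}{x}-2+x$. I would exploit three properties of $f$: it is convex ($f''=2/x^3>0$) and decreasing on $(0,1)$ ($f'=1-1/x^2<0$), while $\norm{\matP_\matA}_2=\norm{\matU_\matA\matU_\matA\transp}_2\le\trace(\matU_\matA\matU_\matA\transp)=\sum_{i\in\matA}\ell_i$ because $\matP_\matA$ is positive. Convexity and Jensen give $\cl{Z}\ge\binom{n}{k}f(\bar p)$ for the mean $\bar p=\binom{n}{k}^{-1}\sum_\matA\norm{\matP_\matA}_2$. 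The trace bound together with the double count $\sum_\matA\sum_{i\in\matA}\ell_i=\binom{n-1}{k-1}\sum_i\ell_i=\binom{n-1}{k-1}d$ yields $\bar p\le (k/n)d=dk/n$, which is strictly below $1$ precisely because $k<n/d$. Monotonicity then turns this into $f(\bar p)\ge f(dk/n)$, so
\[
  \cl{Z}\ge\binom{n}{k}\,\frac{(1-dk/n)^2}{dk/n}
  =\binom{n}{k}\,\frac{(n-dk)^2}{n\,dk}
  =\binom{n-1}{k-1}\,\frac{(n-dk)^2}{dk^2},
\]
using $\binom{n}{k}(k/n)=\binom{n-1}{k-1}$. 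This is exactly the needed bound, and substituting it back produces the factor $1+dk^2/(n-dk)^2$.

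I expect the only genuine subtlety to be the chain in the last display: the hypothesis $k<n/d$ is what keeps the averaged argument $dk/n$ inside the interval $(0,1)$ where $f$ is simultaneously convex and decreasing, so that the Jensen step and the monotonicity step are both legitimate, and it also guarantees $n-dk>0$ so the claimed ratio is well defined. It is worth noting that the three inequalities (Jensen, trace, monotonicity) combine to reproduce the bound with the exact constant in the statement, specializing correctly to the tight $k=1$ computation where the trace bound $\norm{\matP_\matA}_2=\ell_i$ is an equality.
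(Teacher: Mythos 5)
Your proposal is correct and takes essentially the same route as the paper's proof: expectation plus the $\binom{n-1}{k-1}$ double-count, a Jensen-type lower bound on $\cl{Z}$ at the average spectral norm (the paper phrases this as equalizing the $1/\norm{\matP_\matA}_2$ terms with their sum held fixed), the relaxation $\norm{\matP_\matA}_2\le\norm{\matU_\matA}_F^2$ (your trace bound is the identical quantity) to bound that average by $dk/n$, and monotonicity of $(1-x)^2/x$ on $(0,1)$. The only differences are organizational: you fold the paper's intermediate quantity $\cl{Q}$ directly into the single lower bound $\cl{Z}\ge\binom{n-1}{k-1}(n-dk)^2/(dk^2)$.
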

Set \math{k=n/(d+\sqrt{n})} in Theorem~\ref{thm:k-points} to get
a \math{(1+d/n)}-approximation. That is, one can reduce
label complexity by \math{n/(d+\sqrt{n})\in\Omega(\sqrt{n})} while attaining
the target approximation ratio.
\begin{proof}
  Taking the expectation in \r{eq:leave-A-out} using the probabilities in
  \r{eq:pA} gives
  \mld{
      \Exp[\norm{\matX\ww_\matA\supminus-\yy}^2]
    \le
    \norm{\matX\ww_*-\yy}^2
    +
    \frac{1}{\cl{Z}}\sum_{\matA}
    \norm{\matA\ww_*-\yy_\matA}^2
    .
    \label{eq:proof-leave-A-out-1}
  }
  Let us evaluate the sum over \math{\matA} in \r{eq:proof-leave-A-out-1}.
  Fix \math{i\in[n]}. The term \math{(\xx_i\transp\ww_*-y_i)^2} appears in
  \math{\choose{n-1}{k-1}} of the \math{\matA}s. Hence the sum over
  \math{\matA} contains \math{\choose{n-1}{k-1}} copies of
  \math{(\xx_i\transp\ww_*-y_i)^2} for each \math{i}. This means
  \mld{    
  \sum_{\matA}
    \norm{\matA\ww_*-\yy_\matA}^2
    =
    \choose{n-1}{k-1}\norm{\matX\ww_*-\yy}^2
  }
  and we get
  \mld{
      \Exp[\norm{\matX\ww_\matA\supminus-\yy}^2]
    \le
    \left(1+\frac{\choose{n-1}{k-1}}{\cl{Z}}\right)\norm{\matX\ww_*-\yy}^2
    .
    \label{eq:proof-leave-A-out-2}
  }
  The remainder of the proof is to upperbound
  \math{{\choose{n-1}{k-1}}/{\cl{Z}}}. We need a lower bound on \math{\cl{Z}}.
  \eqar{
    \cl{Z}
    &=&
    \sum_{\matA}\frac{(1-\norm{\matP_\matA}_2)^2}{\norm{\matP_\matA}_2}
    =
    \sum_{\matA}\frac{1}{\norm{\matP_\matA}_2}
    +
    \sum_{\matA}{\norm{\matP_\matA}_2}
    -2n.
  }
  As in the proof of Theorem~\ref{thm:one-point}, fix the sum
  \math{\sum_{\matA}{\norm{\matP_\matA}_2}}. Then the sum
  \math{\sum_{\matA}{1}/{\norm{\matP_\matA}_2}} is minimized when each
  term has the same value, i.e., 
    \math{\norm{\matP_\matA}_2=\sum_{\matA}{\norm{\matP_\matA}_2}/\choose{n}{k}},
    the average
  spectral norm of the partial projection matrices (recall that \math{\matA} has
  \math{k} rows). Define \math{\cl{Q}} as this average spectral norm,
  \mld{
    \cl{Q}=\frac{1}{\choose{n}{k}}\sum_{\matA}{\norm{\matP_\matA}_2}.
  }
  Then,
  \mld{
    \cl{Z}
    \ge
    \choose{n}{k}\frac{(1-\cl{Q})^2}{\cl{Q}}.
    \label{eq:proof-leave-A-out-Z}
  }
  Using \r{eq:proof-leave-A-out-Z} in \r{eq:proof-leave-A-out-2} gives
  \mld{
    \Exp[\norm{\matX\ww_\matA\supminus-\yy}^2]
    \le
    \left(1+\frac{k}{n}\frac{\cl{Q}}{(1-\cl{Q})^2}\right)\norm{\matX\ww_*-\yy}^2
    .
    \label{eq:proof-leave-A-out-2.1}
  }
  We need an upper bound on
  \math{\cl{Q}}. Note \math{\norm{\matP_\matA}_2=\norm{\matU_\matA\matU_\matA\transp}_2
    =\norm{\matU_\matA}_2^2.} Therefore,
  \mld{
    \frac{1}{d}\norm{\matU_\matA}_F^2\le\norm{\matP_\matA}_2\le\norm{\matU_\matA}_F^2.}
  Since \math{\norm{\matU_\matA}_F^2=\sum_{j\in\matA}\norm{\uu_j}^2}, we have
  \mld{
    \frac{1}{d\choose{n}{k}}\sum_{\matA}\sum_{j\in\matA}\norm{\uu_j}^2
    \le \cl{Q}
    \le
    \frac{1}{\choose{n}{k}}\sum_{\matA}\sum_{j\in\matA}\norm{\uu_j}^2.
    \label{eq:proof-leave-A-out-3}
  }
  Fix \math{i\in[n]}. The term
  \math{\norm{\uu_j}^2} appears in \math{\choose{n-1}{k-1}} of the
  \math{\matA}s, hence
  \eqar{
    \sum_{\matA}\sum_{j\in\matA}\norm{\uu_j}^2
    &=&\choose{n-1}{k-1}\sum_{i=1}^n
    \norm{\uu_j}^2
    =
    \choose{n-1}{k-1}d,
    \label{eq:proof-leave-A-out-4}
  }
  where the last step uses \math{\sum_{i}
    \norm{\uu_j}^2=d} (orthogonality of \math{\matU}).
  Using \r{eq:proof-leave-A-out-4} in \r{eq:proof-leave-A-out-3} gives
  \mld{
    \frac{k}{n}\le \cl{Q}\le \frac{dk}{n}.
    \label{eq:Q-bound}
  }
  Finally, using the upper bound for \math{\cl{Q}} in \r{eq:Q-bound} in   \r{eq:proof-leave-A-out-2.1} completes
  the proof.
\end{proof}

\noindent
{\bf Comment.} Our analysis of \math{\cl{Q}} in the proof is loose by at most
a factor of \math{d}, which could \math{\sqrt{d}}-factor
increase in the data thrown out.
Indeed, with \math{\cl{Q}=k/n},
\math{k=n\sqrt{d}/(\sqrt{d}+\sqrt{n})} gives approximation
\math{1+d/n}.
Getting tighter bounds on the average squared spectral norm  of
\math{k} rows of an orthogonal \math{n\times d} matrix would have an impact.

\noindent
{\bf Comment.}
There is a big gap between the \math{\Omega(\sqrt{n})}
reduction in label complexity
offered in Theorem~\ref{thm:k-points}
compared to the chaining analysis and lower bound which suggests
that \math{\Omega(n)} is possible.
It is an interesting question whether this gap can be closed.

\noindent
{\bf Comment.} Our inference algorithm on the reduced data is simple
linear regression, the same inference algorithm used on the full data.
One direction
for improving the result is to couple the inference
algorithm to the data thrown out. Specifically reweighting
the left-in data and/or using some form of regularization in the fitting.

\noindent
{\bf Comment.}
The proof is constructive. However getting all sampling
probabilities
exactly is exponential, taking
\math{O(\choose{n}{k}kd\min\{k,d\})} time. We discuss a polynomial
sampling algorithm next.

\section{Polynomial Sampling Algorithm}
\label{sec:sampling}

We wish to exactly sample from the probability distribution
\r{eq:pA} efficiently.
The probabilities
nontrivially depend on  \math{\norm{\matU_\matA}_2} and
the spectral norm itself is hard to deal with.
We give a sampling algorithm based on rejection whose
efficiency depends
on the small leverage scores (a measure of incoherence) but is
otherwise
polynomial. This sampling efficiency can be pre-computed.

Sampling a submatrix using probabilities determined by some nontrivial
property of
the submatrix is generally not easy.
One example is volume sampling~\cite{DRVW2006},
where the probabilities depend on the product of singular values.
One setting where it is easy to sample exactly
is when the probability of a set of rows is the sum of some function over
the rows. Let \math{\matU\transp=[\uu_1,\ldots,\uu_n]} be a matrix with
rows \math{\uu_i}. Let \math{f(\uu)} be a nonnegative function
and define the sampling probability for a set of \math{k}
rows
\math{\matA} as proportional to the sum of \math{f} over the rows
in \math{\matA},
\mld{p_\matA=\frac{1}{\cl{Z}}\sum_{i\in\matA}f(\uu_i),
\label{eq:pA-sum}}
where
\mld{
  \cl{Z}=\sum_{\matA}\sum_{i\in\matA}f(\uu_i)
  =
  \choose{n-1}{k-1}\sum_{i=1}^n f(\uu_i).
  }
For any \math{f}, one can sample exactly using probabilities
\math{p_\matA} in \math{O(n)} time.
\begin{theorem}\label{theorem:sampling}
  Sample one row \math{\uu_i} according to
  the probabilities
  \mld{p_i=\frac{f(\uu_i)}{\sum_{j=1}^nf(\uu_j)}.}
  Sample \math{k-1} rows (without replacement) uniformly
  from the \math{\choose{n-1}{k-1}} possible \math{(k-1)}-subsets
  of the remaining \math{n-1} rows in \math{\matU_{i}\supminus}.
  For any \math{f}, the probability to sample \math{\matA} is given by
  \math{p_\matA} in \r{eq:pA-sum}.
\end{theorem}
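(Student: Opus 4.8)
The plan is to verify directly that the two-stage procedure assigns each $k$-subset $\matA$ exactly the weight $p_\matA$ in \r{eq:pA-sum}, so the whole proof is a bookkeeping calculation with no analytic content. Fix a target subset $\matA=\{i_1,\ldots,i_k\}$. The procedure outputs $\matA$ precisely when the first (weighted) draw lands on some index $i\in\matA$ and the subsequent uniform draw of $k-1$ rows from the remaining $n-1$ returns exactly $\matA\setminus\{i\}$. These $k$ ways of producing $\matA$ — one for each choice of which element of $\matA$ is drawn first — are mutually exclusive, so the probability of outputting $\matA$ is their sum, and I would begin by writing this decomposition explicitly.

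Next I would compute each of the two factors. The first draw selects a given $i$ with probability $f(\uu_i)/\sum_{j=1}^n f(\uu_j)$ by definition of $p_i$. Conditioned on the first index being $i$, the second stage picks one of the $\choose{n-1}{k-1}$ equally likely $(k-1)$-subsets of $\matU_i\supminus$, so the specific subset $\matA\setminus\{i\}$ arises with probability $1/\choose{n-1}{k-1}$, a value that crucially does \emph{not} depend on $i$. Multiplying the two factors and summing over $i\in\matA$ gives
\[
\Pr[\matA]=\sum_{i\in\matA}\frac{f(\uu_i)}{\sum_{j}f(\uu_j)}\cdot\frac{1}{\choose{n-1}{k-1}}=\frac{\sum_{i\in\matA}f(\uu_i)}{\choose{n-1}{k-1}\sum_{j}f(\uu_j)}.
\]

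Finally I would identify the denominator with $\cl{Z}$. The stated value $\cl{Z}=\choose{n-1}{k-1}\sum_{j=1}^n f(\uu_j)$ is itself a double-counting identity: each term $f(\uu_i)$ appears in exactly $\choose{n-1}{k-1}$ of the $k$-subsets $\matA$, so summing $\sum_{i\in\matA}f(\uu_i)$ over all $\matA$ counts every $f(\uu_i)$ that many times. Substituting this collapses the expression to $\Pr[\matA]=\sum_{i\in\matA}f(\uu_i)/\cl{Z}=p_\matA$, as required. There is no genuine obstacle in this argument; the only point needing care is the combinatorial observation that the conditional uniform weight $1/\choose{n-1}{k-1}$ is independent of the first-drawn element, which is exactly what lets the weighted first draw bias the otherwise-uniform distribution over subsets into the desired sum-of-$f$ law.
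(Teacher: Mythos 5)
Your proposal is correct and follows essentially the same route as the paper: decompose the event of producing $\matA$ over the $k$ mutually exclusive choices of which element is drawn first, multiply the weighted first-draw probability by the uniform factor $1/\choose{n-1}{k-1}$, sum, and identify the result with $p_\matA$ via the definition of $\cl{Z}$. The only difference is cosmetic — you additionally spell out the double-counting identity behind $\cl{Z}=\choose{n-1}{k-1}\sum_{j}f(\uu_j)$, which the paper treats as part of the definition.
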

\begin{proof}
  Consider the set of rows \math{\matA\transp=[\uu_1,\ldots,\uu_k]}. The same
  argument applies to any other \math{k}
  rows. We compute the probability
  to sample \math{\matA}. Conditioning
  on the first row sampled,
  \eqar{
    \Prob[\matA]
    &=&
    \sum_{i=1}^k
    \Prob[\text{\math{\uu_i} is the first row sampled}]
    \Prob[\text{\math{\matA_i\supminus} are the remaining rows sampled}\given\uu_i]
    \\
    &=&
    \sum_{i=1}^k p_i\times\frac{1}{\choose{n-1}{k-1}}
    \\
    &=&
    \frac{1}{\choose{n-1}{k-1}\sum_{j=1}^nf(\uu_j)}\sum_{i=1}^kf(\uu_i)
    \\
    &=&
    p_\matA,
  }
  where the last step follows from the definitions of
  \math{p_\matA} and \math{\cl{Z}}.
\end{proof}
{\bf Comment.} Sampling using Frobenius norm probabilities,
\math{p_\matA\propto\sum_{i\in\matA}\norm{\uu_i}^2} fits the assumptions of the
theorem with \math{f(\uu)=\norm{\uu}^2}. Sampling using inverse sum of
leverage scores also fits, where \math{f(\uu)=1/\norm{\uu}^2} in which case
\math{p_\matA\propto\sum_{i\in\matA}1/\norm{\uu_i}^2}.

We use rejection to sample \math{\matA} according to the
probabilities in \r{eq:pA}. Here is the algorithm.
\begin{center}
 \fbox{
     \begin{minipage}{0.875\linewidth}
    \begin{algorithmic}[1]
      \State Sampling \math{\matA} using probabilities \math{p_\matA} in
      \r{eq:pA}.
  \Repeat
  \State Sample \math{\matA} using Theorem~\ref{theorem:sampling}
  and the probabilities \math{q_\matA}
  given by \math{f(\uu)=1/\norm{\uu}^2},
  \mld{
    q_\matA=\frac{1}{\choose{n-1}{k-1}\sum_{j=1}^n 1/\norm{\uu_j}^2}
    \sum_{i\in\matA}{1}/{\norm{\uu_i}^2}.
  }
  \State Accept \math{\matA} with probability
  \mld{\theta_\matA=
    \frac{(1-\norm{\matU_\matA}_2^2)^2/\norm{\matU_\matA}_2^2}{\frac{d}{k^2}
      \sum_{i\in\matA}1/\norm{\uu_i}^2}.
  \label{eq:theta-A}}
  \Until{\math{\matA} is accepted.}
    \end{algorithmic}
    \end{minipage}
      }
\end{center}
First, to show that the rejection sampling is valid, we need that
\math{\theta_\matA\le 1}. Indeed this is the case. We prove it as follows.
Using \math{\norm{\matU_\matA}_2^2\ge\norm{\matU_\matA}_F^2/d} and
\math{\norm{\matU_\matA}_F^2=\sum_{i\in\matA}\norm{\uu_i}^2} gives
\mld{
  \frac{(1-\norm{\matU_\matA}_2^2)^2}{\norm{\matU_\matA}_2^2}
  \le
  \frac{(1-\frac{1}{d}\sum_{i\in\matA}\norm{\uu_i}^2)^2}{\frac{1}{d}
    \sum_{i\in\matA}\norm{\uu_i}^2}
  \le
  \frac{d}{\sum_{i\in\matA}\norm{\uu_i}^2}.
  \label{eq:numerator-theta-A}
}
We use a convexity argument to bound
\math{d/\sum_{i\in\matA}\norm{\uu_i}^2},
\mld{
  \frac{d}{\sum_{i\in\matA}\norm{\uu_i}^2}
  =\frac{d}{k}\cdot\frac{1}{\frac{1}{k}\sum_{i\in\matA}\norm{\uu_i}^2}
  \le
  \frac{d}{k}\cdot\frac{1}{k}\sum_{i\in\matA}1/\norm{\uu_i}^2
  =
  \frac{d}{k^2}\sum_{i\in\matA}1/\norm{\uu_i}^2.
  \label{eq:denominator-theta-A}
}
Combining \r{eq:numerator-theta-A} and \r{eq:denominator-theta-A} in
\r{eq:theta-A} establishes that \math{\theta_\matA\le 1}, so the
rejection sampling is valid. We now show that the
probability distribution of \math{\matA} conditioned on it being
accepted is as desired in \r{eq:pA}. Indeed,
\eqar{
  \Prob[\matA\given\text{accept}]
  &=&
  \frac{\Prob[\matA\cap\text{accept}]}{\Prob[\text{accept}]}
  \\
  &=&
  \frac{q_\matA\theta_\matA}{\sum_\matA q_\matA\theta_\matA}.
}
For the probability to accept, we have
\eqar{
  \sum_{\matA}q_\matA\theta_\matA
  &=&
  \frac{1}{\frac{d}{k^2}\choose{n-1}{k-1}\sum_{j=1}^n 1/\norm{\uu_j}^2}
  \sum_{\matA}\frac{(1-\norm{\matU_\matA}_2^2)^2}{\norm{\matU_\matA}_2^2}
  \\
  &=&
  \frac{\cl{Z}}{\frac{d}{k^2}\choose{n-1}{k-1}\sum_{j=1}^n 1/\norm{\uu_j}^2}
  .
}
Dividing \math{q_\matA\theta_A} by the above gives
\mld{
  \Prob[\matA\given\text{accept}]
  =
  \frac{1}{\cl{Z}}\frac{(1-\norm{\matU_\matA}_2^2)^2}{\norm{\matU_\matA}_2^2},
}
as desired. The expected number of
trials to accept \math{\matA} is given by
\math{1/\Prob[\text{accept}]}. The cost of a
trial is the time to generate
a sample according to the probabilities \math{q_\matA},
which is \math{O(n)},
plus the time to compute \math{\theta_A} which is
\math{O(kd\min\{k,d\})}.
Hence, the expected runtime is
\mld{
  \text{runtime}=\frac{O(n+kd\min\{k,d\})}{\Prob[\text{accept}]}.
}
We need a lower bound on \math{\Prob[\text{accept}]}.
For the input matrix \math{\matX},
define a measure of coherence \math{\mu},
\mld{
  \mu=\frac{1}{n}\sum_{i=1}^n\frac{1}{\norm{\uu_i}^2},
}
This measure of coherence is the average of the reciprocals of the
leverage scores. If the leverage scores are uniform, then
\math{\mu=n/d}. In general, \math{\mu\ge n/d} by convexity.
The larger \math{\mu}, the less uniform the
leverage scores.
The coherence \math{\mu} captures how many of the leverage scores are small.
We get a lower bound for
\math{\Prob[\text{accept}]} in terms of \math{\mu} as follows.
\eqar{
  \Prob[\text{accept}]
  &=&
  \frac{k^2}{nd\mu\choose{n-1}{k-1}}
    \sum_{\matA}\frac{(1-\norm{\matU_\matA}_2^2)^2}{\norm{\matU_\matA}_2^2}
}
To get a lower bound for the sum over \math{\matA}, we use
\math{\norm{\matU_A}^2_2\le\norm{\matU_A}^2_F},
\eqar{
  \sum_{\matA}\frac{(1-\norm{\matU_\matA}_2^2)^2}{\norm{\matU_\matA}_2^2}
  &\ge&
  \sum_{\matA}\frac{(1-\norm{\matU_\matA}_F^2)^2}{\norm{\matU_\matA}_F^2}
  \nonumber
  \\
  &=&
  \sum_{\matA}{\norm{\matU_\matA}_F^2}+
  \sum_{\matA}\frac{1}{\norm{\matU_\matA}_F^2}
  -2\choose{n}{k}
  \nonumber
  \\
  &=&
  \choose{n-1}{k-1}d+\sum_{\matA}\frac{1}{\norm{\matU_\matA}_F^2}
  -2\choose{n}{k}
  \nonumber
  \\
  &\ge&
  \choose{n-1}{k-1}d.\label{eq:Z-lower-bound}
}
where the last step follows from Lemma~\ref{lemma-positive-residual} which
states that
\math{\sum_{\matA}1/\norm{\matU_\matA}_F^2>2\choose{n}{k}} when
\math{n\ge 8dk}. Since \math{k\in\Theta(\sqrt{n})}, this means
\math{n\in\Omega(d^2)}.
\mld{
  \Prob[\text{accept}]\ge\frac{k^2}{n\mu}.
}
Since \math{k^2\in\Theta(n)}, this means that
\math{\text{runtime}\in O(\mu(n+kd\min\{k,d\}))}.
Since \math{\mu} is typically of order \math{n/d}, the runtime is in
\math{O(n^2/d)}, a polynomial runtime. We now prove
the last step in~\r{eq:Z-lower-bound}.
\begin{lemma}\label{lemma-positive-residual}
  For \math{n\ge 8dk},
  \math{\displaystyle
    \sum_{\matA}\frac{1}{\norm{\matU_\matA}_F^2}-2\choose{n}{k}\ge0.
  }
\end{lemma}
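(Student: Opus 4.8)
The plan is to reduce the statement to a pure inequality about leverage scores and then close it with one convexity step. Writing $\ell_i=\norm{\uu_i}^2$ for the leverage scores, we have $\norm{\matU_\matA}_F^2=\sum_{i\in\matA}\ell_i=:S_\matA$, where $\sum_{i=1}^n\ell_i=d$ and each $\ell_i>0$ (as $\matX$ is full rank with no zero rows, so every $S_\matA>0$ and the reciprocals are well defined). The target $\sum_\matA 1/\norm{\matU_\matA}_F^2\ge2\binom{n}{k}$ is thus exactly the assertion that the \emph{average} of $1/S_\matA$ over the $\binom{n}{k}$ many $k$-subsets is at least $2$. The intuition is that this average is driven up by subsets made of low-leverage rows, for which $S_\matA$ is small and $1/S_\matA$ large, and convexity of $x\mapsto1/x$ is precisely what turns that into a quantitative bound.

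First I would compute the average of $S_\matA$ itself. By linearity each $\ell_i$ occurs in exactly $\binom{n-1}{k-1}$ of the subsets, so $\sum_\matA S_\matA=\binom{n-1}{k-1}\sum_{i=1}^n\ell_i=\binom{n-1}{k-1}d$, giving an average of $\binom{n-1}{k-1}d/\binom{n}{k}=dk/n$. Next I would invoke Cauchy--Schwarz (equivalently the AM--HM inequality, or Jensen applied to $1/x$):
\[
\binom{n}{k}^2=\Big(\sum_\matA 1\Big)^2\le\Big(\sum_\matA S_\matA\Big)\Big(\sum_\matA\tfrac{1}{S_\matA}\Big),
\]
which rearranges to $\sum_\matA 1/S_\matA\ge\binom{n}{k}^2/\sum_\matA S_\matA=\binom{n}{k}\cdot n/(dk)$.

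The conclusion is then immediate: $\sum_\matA 1/\norm{\matU_\matA}_F^2\ge\binom{n}{k}\,n/(dk)\ge2\binom{n}{k}$ exactly when $n\ge2dk$, and this is comfortably implied by the hypothesis $n\ge8dk$ used at \r{eq:Z-lower-bound}. In fact this route improves the constant (any $n\ge2dk$ suffices) and yields the strict inequality $\sum_\matA 1/\norm{\matU_\matA}_F^2>2\binom{n}{k}$ as soon as $n>2dk$, matching the strict form cited in the main text.

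I do not anticipate a real obstacle: the entire content is the single convexity inequality, and the only point requiring care is positivity of the $\ell_i$ (hence of the $S_\matA$), which the standing assumptions guarantee. If one prefers to avoid Cauchy--Schwarz, the identical bound comes from Jensen's inequality under the uniform measure on $k$-subsets, $\Exp[1/S_\matA]\ge1/\Exp[S_\matA]=n/(dk)$, followed by multiplying through by $\binom{n}{k}$.
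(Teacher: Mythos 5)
Your proof is correct, and it takes a genuinely different route from the paper's. The paper argues by counting: it calls a subset $\matA$ \emph{bad} when $\norm{\matU_\matA}_F^2\ge 1/4$, uses the identity $\sum_\matA\norm{\matU_\matA}_F^2=\choose{n-1}{k-1}d$ to bound the number of bad subsets by $4d\choose{n-1}{k-1}$, lower bounds each good subset's contribution to $\sum_\matA 1/\norm{\matU_\matA}_F^2$ by $4$, and so gets $\sum_\matA 1/\norm{\matU_\matA}_F^2\ge 4\choose{n}{k}-16d\choose{n-1}{k-1}$, which exceeds $2\choose{n}{k}$ precisely when $n\ge 8dk$. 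You start from the same counting identity $\sum_\matA S_\matA=\choose{n-1}{k-1}d$ but then apply Cauchy--Schwarz (equivalently AM--HM or Jensen) to the pair $\bigl(S_\matA,1/S_\matA\bigr)$, obtaining $\sum_\matA 1/S_\matA\ge\choose{n}{k}\,n/(dk)$ in one line. This is both shorter and quantitatively stronger: the hypothesis weakens from $n\ge 8dk$ to $n\ge 2dk$, and you get the \emph{strict} inequality $\sum_\matA 1/\norm{\matU_\matA}_F^2>2\choose{n}{k}$ whenever $n>2dk$, in particular under the paper's assumption $n\ge 8dk$. That strictness is exactly what the main text invokes right after \r{eq:Z-lower-bound} (the lemma is cited there with ``$>$''), whereas the paper's own proof only yields the non-strict bound at the boundary $n=8dk$; your argument quietly closes that small discrepancy. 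Your route is also in the same spirit as the averaging/convexity steps the paper already uses to lower bound $\cl{Z}$ in Theorems~\ref{thm:one-point} and~\ref{thm:k-points}, so it fits the paper stylistically. The one thing the paper's proof delivers that yours does not is a structural statement: all but a $4dk/n$ fraction of the $k$-subsets satisfy $\norm{\matU_\matA}_F^2<1/4$ and hence each individually contributes at least $4$ to the sum --- more information than the aggregate bound, though none of it is needed for the lemma as stated.
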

\begin{proof}
  Define \math{\cl{A}_{\text{bad}}} as the set of bad \math{\matA}s for which
  \math{\norm{\matU_\matA}_F^2\ge 1/4}. Then,
  \mld{
    \choose{n-1}{k-1}d=\sum_{\matA}\norm{\matU_\matA}_F^2
    =\sum_{\matA\in\cl{A}_{\text{bad}}}\norm{\matU_\matA}_F^2
    +
    \sum_{\matA\not\in\cl{A}_{\text{bad}}}\norm{\matU_\matA}_F^2
    \ge\frac{|\cl{A}_{\text{bad}}|}{4}.
  }
  This means
  \mld{|\cl{A}_{\text{bad}}|\le 4d\choose{n-1}{k-1}.\label{eq:bound-Abad}}
  We therefore
  have that
  \eqar{
    \sum_{\matA}\frac{1}{\norm{\matU_\matA}_F^2}
    &=&
    \sum_{\matA\in\cl{A}_{\text{bad}}}\frac{1}{\norm{\matU_\matA}_F^2}
    +
    \sum_{\matA\not\in\cl{A}_{\text{bad}}}\frac{1}{\norm{\matU_\matA}_F^2}
    \nonumber
    \\
    &\ge&
    \sum_{\matA\not\in\cl{A}_{\text{bad}}}\frac{1}{\norm{\matU_\matA}_F^2}
    \nonumber
    \\
    &\ge&
    \left(\choose{n}{k}-|\cl{A}_{\text{bad}}|\right)\times 4
  \nonumber
  \\
  &\ge&
  4\choose{n}{k}-16d\choose{n-1}{k-1},
  }
  where the last step uses \r{eq:bound-Abad}.
  Subtracting \math{2\choose{n}{k}} gives
  \mld{
    \sum_{\matA}\frac{1}{\norm{\matU_\matA}_F^2}-2\choose{n}{k}
    \ge 2\choose{n}{k}-16d\choose{n-1}{k-1}
    =2\choose{n-1}{k-1}\left(\frac{n}{k}-8d\right).
  }
  The lemma follows from the assumption \math{n\ge 8dk}.
\end{proof}

{\small
\bibliographystyle{natbib}
\bibliography{mypapers,masterbib,Local} 
}
\clearpage

\appendix
\section{Consistent \math{\ell_2} Regression}
\label{sec:app:consistent}

We prove Theorem~\ref{theorem:consistent} for consistent regression,
including the version with fast preprocessing to get approximate
leverage scores and approximate preconditioners. As far as we know
Lemma~\ref{lemma:precond}
in the general form stated is new, and may be of independent
interest.

\subsection{Mathematical Preliminaries}
\label{sec:prelim}

\subsubsection*{\math{\ell_2}-Embeddings}

The matrix \math{\Pi} is an \math{\epsilon}-JLT for an orthogonal matrix
\math{\matU} if
\mld{
  \norm{\matI-\matU\transp\Pi\transp\Pi\matU}_2\le \epsilon.
}
That is, if \math{\Pi\matU} is almost orthogonal. The next lemma summarizes
some of the consequences of an \math{\epsilon}-JLT, which are well known, see
for example
\cite{DMMS07_FastL2_NM10}.
\begin{lemma}\label{lemma:embedding-properties}
Let \math{\matU\in\R^{n,d}} be orthogonal and
\math{\Pi\in\R^{n\times r}}.
Suppose \math{\norm{\matI-\matU\transp\Pi\transp\Pi\matU}\le\epsilon<1}. Then,
\begin{enumerate}
\item \math{|1-\sigma_i^2(\Pi\matU)|\le\epsilon} and
\math{\rank(\Pi\matU)=d}.
\item Let \math{\Pi\matU=\tilde\matU\tilde\Sigma\tilde\matV\transp}.
Then,
\math{\norm{\tilde\Sigma-\tilde\Sigma^{-1}}\le \epsilon/\sqrt{1-\epsilon}}
and \math{\norm{\matI-\tilde\Sigma^{-2}}\le\epsilon/(1-\epsilon)}.
\item
\math{\norm{(\Pi\matU)^{\dagger}-(\Pi\matU)\transp}\le\epsilon/\sqrt{1-\epsilon}}.

\item Let \math{\matA=\matU\Sigma\matV\transp}, where
\math{\Sigma} is positive diagonal and \math{\matV} is orthogonal. Then
\math{(\Pi\matA)^{\dagger}=\matV\Sigma^{-1}(\Pi\matU)^{\dagger}}.
\item \math{\norm{\matI-(\Pi\matU)^{\dagger}(\Pi\matU)^{\dagger\textsc{t}}}\le \epsilon/(1-\epsilon).}
\end{enumerate}
\end{lemma}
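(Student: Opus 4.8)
The plan is to reduce every part to the spectral behaviour of the $d\times d$ Gram matrix $M=(\Pi\matU)\transp\Pi\matU=\matU\transp\Pi\transp\Pi\matU$, whose eigenvalues are exactly the squared singular values $\sigma_i^2(\Pi\matU)$. The hypothesis $\norm{\matI-M}_2\le\epsilon$ says precisely that every eigenvalue of $M$ lies in $[1-\epsilon,1+\epsilon]$, which is the Part~1 bound $|1-\sigma_i^2(\Pi\matU)|\le\epsilon$. Since $\epsilon<1$, the smallest eigenvalue is strictly positive, so $M$ is invertible and $\rank(\Pi\matU)=d$. This full-column-rank fact is what legitimises all the pseudo-inverse identities below, so I would establish it first.

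For Part~2 I would work entirely with the positive diagonal matrix $\tilde\Sigma$ of singular values, using $\sigma_i^2\in[1-\epsilon,1+\epsilon]$ and hence $\sigma_i\ge\sqrt{1-\epsilon}$. The two claimed bounds are then the scalar estimates $|\sigma_i-1/\sigma_i|=|\sigma_i^2-1|/\sigma_i\le\epsilon/\sqrt{1-\epsilon}$ and $|1-1/\sigma_i^2|=|\sigma_i^2-1|/\sigma_i^2\le\epsilon/(1-\epsilon)$, maximised over $i$ to pass to the spectral norm.

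Parts~3 and~5 follow by substituting the compact SVD $\Pi\matU=\tilde\matU\tilde\Sigma\tilde\matV\transp$ and peeling off the factors with orthonormal columns, which leave the spectral norm unchanged. Concretely, $(\Pi\matU)^\dagger-(\Pi\matU)\transp=\tilde\matV(\tilde\Sigma^{-1}-\tilde\Sigma)\tilde\matU\transp$, so its norm equals $\norm{\tilde\Sigma-\tilde\Sigma^{-1}}_2$ from Part~2; and using $\tilde\matU\transp\tilde\matU=\matI_d$ one gets $(\Pi\matU)^\dagger(\Pi\matU)^{\dagger\textsc{t}}=\tilde\matV\tilde\Sigma^{-2}\tilde\matV\transp$, whence $\matI-(\Pi\matU)^\dagger(\Pi\matU)^{\dagger\textsc{t}}=\tilde\matV(\matI-\tilde\Sigma^{-2})\tilde\matV\transp$ has norm $\norm{\matI-\tilde\Sigma^{-2}}_2$.

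For Part~4 I would observe that $\Pi\matA=(\Pi\matU)\Sigma\matV\transp$ has full column rank $d$ (Part~1 together with invertibility of $\Sigma$ and $\matV$), so $(\Pi\matA)^\dagger=((\Pi\matA)\transp\Pi\matA)^{-1}(\Pi\matA)\transp$. Expanding $(\Pi\matA)\transp\Pi\matA=\matV\Sigma M\Sigma\matV\transp$, inverting, and cancelling gives $\matV\Sigma^{-1}M^{-1}(\Pi\matU)\transp=\matV\Sigma^{-1}(\Pi\matU)^\dagger$, where the last step uses $(\Pi\matU)^\dagger=M^{-1}(\Pi\matU)\transp$. There is no genuine obstacle here: each item is a short linear-algebra computation, and the only content beyond bookkeeping is the scalar inequalities of Part~2 and the initial observation that $M$'s eigenvalue spread controls all the singular-value quantities. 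The one point to handle carefully is to invoke the full-column-rank conclusion of Part~1 every time a pseudo-inverse is manipulated, since that is what makes the $((\,\cdot\,)\transp(\,\cdot\,))^{-1}(\,\cdot\,)\transp$ formula valid.
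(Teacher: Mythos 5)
Your proposal is correct and follows essentially the same route as the paper: reduce everything to the singular values of $\Pi\matU$ via the Gram matrix, prove Part~2 by scalar estimates, and handle Parts~3--5 by substituting the compact SVD (with Part~4 from the full-column-rank pseudo-inverse formula). You in fact supply details the paper elides, notably the explicit computation behind Part~4, which the paper dismisses as ``properties of the pseudo-inverse.''
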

\begin{proof}
Part 1 is immediate and implies Part 2 which implies Part 3
because
\math{(\Pi\transp\matU)^{\dagger}=
\tilde\matV\tilde\Sigma^{-1}\tilde\matU\transp}, so
\mand{\norm{(\Pi\transp\matU)^{\dagger}-(\Pi\transp\matU)\transp}
=
\norm{\tilde\matV\tilde\Sigma^{-1}\tilde\matU\transp-\tilde\matV\tilde\Sigma\tilde\matU\transp}
\le
\norm{\tilde\Sigma^{-1}-\tilde\Sigma}.}
Part 4 follows from properties of the pseudo-inveerse. For Part 5, using
\math{\Pi\transp\matU=\tilde\matU\tilde\Sigma\tilde\matV\transp} and
\math{\tilde\matV\tilde\matV\transp=\matI} (since
\math{\Pi\transp\matU} has full rank, so \math{\tilde\matV} is a square
orthogonal matrix),
\mand{\norm{\matI-(\Pi\transp\matU)^{\dagger}(\Pi\transp\matU)^{\dagger\textsc{t}}}
  =
  \norm{\tilde\matV\tilde\matV\transp-\tilde\matV\tilde\Sigma^{-2}\tilde\matV\transp}=\norm{\matI-\Sigma^{-2}}\le\epsilon/(1-\epsilon),}
where the last step follows from Part 2. 
\end{proof}

The next result shows that an \math{\epsilon}-JLT can be used
find a preconditioner.
\begin{lemma}\label{lemma:precond}
  Let \math{\matA=\matU\matSig\matV\transp} be \math{n\times d}, having full rank \math{d}.
  Let \math{\Pi\matA=\matQ\matR}, where \math{\matQ} is any orthogonal
  matrix and \math{\matR} is invertible. 
  Then \math{\sigma_i^2(\matA\matR^{-1})=1/\sigma_{d+1-i}^2(\Pi\matU)}.
\end{lemma}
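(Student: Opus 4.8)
The plan is to exploit the single defining property of the factorization $\Pi\matA=\matQ\matR$, namely that $\matQ$ has orthonormal columns ($\matQ\transp\matQ=\matI_d$), together with the SVD $\matA=\matU\matSig\matV\transp$. The whole statement should collapse onto one square $d\times d$ matrix that simultaneously governs the singular values of $\matA\matR^{-1}$ and of $\Pi\matU$. Concretely, I would set $\mat{N}=\matSig\matV\transp\matR^{-1}$, which is $d\times d$ and invertible since $\matSig$, $\matV$ and $\matR$ all are. Substituting the SVD then produces the two clean identities $\matA\matR^{-1}=\matU\mat{N}$ and $\Pi\matA\matR^{-1}=(\Pi\matU)\mat{N}$, the second because $\Pi\matA=(\Pi\matU)\matSig\matV\transp$.

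First I would read off the singular values of $\matA\matR^{-1}$. Since $\matU\transp\matU=\matI_d$,
\[
(\matA\matR^{-1})\transp(\matA\matR^{-1})=\mat{N}\transp\matU\transp\matU\mat{N}=\mat{N}\transp\mat{N},
\]
so $\sigma_i(\matA\matR^{-1})=\sigma_i(\mat{N})$ for every $i$; left-multiplying $\mat{N}$ by the orthonormal-column matrix $\matU$ leaves singular values unchanged.

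Next I would invoke orthonormality of $\matQ=\Pi\matA\matR^{-1}$. From $\matQ\transp\matQ=\matI_d$ and $\Pi\matA\matR^{-1}=(\Pi\matU)\mat{N}$,
\[
\mat{N}\transp(\Pi\matU)\transp(\Pi\matU)\mat{N}=\matI_d,
\]
and because $\mat{N}$ is invertible, solving for the Gram matrix of $\Pi\matU$ gives
\[
(\Pi\matU)\transp(\Pi\matU)=(\mat{N}\transp)^{-1}\mat{N}^{-1}=(\mat{N}\mat{N}\transp)^{-1}.
\]
Thus $(\Pi\matU)\transp(\Pi\matU)$ is exactly the inverse of $\mat{N}\mat{N}\transp$, whose eigenvalues are $\sigma_i^2(\mat{N})=\sigma_i^2(\matA\matR^{-1})$.

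Finally I would match spectra. The eigenvalues of $(\Pi\matU)\transp(\Pi\matU)$ are $\sigma_i^2(\Pi\matU)$, while those of $(\mat{N}\mat{N}\transp)^{-1}$ are the reciprocals $1/\sigma_i^2(\matA\matR^{-1})$. Since inversion reverses the order of the (positive) eigenvalues, equating the sorted lists yields $\sigma_i^2(\Pi\matU)=1/\sigma_{d+1-i}^2(\matA\matR^{-1})$, equivalently $\sigma_i^2(\matA\matR^{-1})=1/\sigma_{d+1-i}^2(\Pi\matU)$, as claimed. The one point demanding care — the ``main obstacle'' — is precisely this bookkeeping of the index reversal $i\mapsto d+1-i$ induced by taking reciprocals; the rest of the argument is forced once one recognizes $\mat{N}$ as the common hinge and uses only $\matU\transp\matU=\matI_d$ and $\matQ\transp\matQ=\matI_d$.
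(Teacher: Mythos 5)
Your proof is correct, and it takes a genuinely different route from the paper's. The paper eliminates \math{\matR^{-1}} via the pseudo-inverse identity \math{\matR^{-1}=(\Pi\matA)^{\dagger}\matQ}, computes the outer Gram matrix \math{\matA\matR^{-1}{\matR^{-1}}\transp\matA\transp}, kills the \math{\matQ\matQ\transp} factor by observing that it projects onto the column space of \math{\Pi\matU}, and then explicitly assembles the SVD \math{\matA\matR^{-1}=\matU\matV_{\Pi\matU}\matSig_{\Pi\matU}^{-1}\matZ\transp} from the SVD of \math{\Pi\matU}. You instead keep \math{\matR^{-1}} packaged inside the invertible hinge \math{\mat{N}=\matSig\matV\transp\matR^{-1}}, note that \math{\matA\matR^{-1}=\matU\mat{N}} and \math{\matQ=(\Pi\matU)\mat{N}}, and use only the two orthonormality facts \math{\matU\transp\matU=\matI_d} and \math{\matQ\transp\matQ=\matI_d} to land on \math{(\Pi\matU)\transp(\Pi\matU)=(\mat{N}\mat{N}\transp)^{-1}}, after which matching sorted spectra forces the index reversal \math{i\mapsto d+1-i}. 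What your version buys is economy and self-containedness: no pseudo-inverses, no appeal to \math{\matQ\matQ\transp} being a projection onto \math{\mathrm{col}(\Pi\matU)} (a step the paper asserts and which implicitly needs \math{\mathrm{col}(\matQ)=\mathrm{col}(\Pi\matA)}, a consequence of \math{\matR} being invertible), and the reciprocal-reversal bookkeeping is made explicit rather than compressed into the phrase ``the singular values are the inverses.'' What the paper's version buys is strictly more information: it identifies the left singular vectors of \math{\matA\matR^{-1}} as \math{\matU\matV_{\Pi\matU}}, i.e., it produces the full SVD up to a rotation of the row space, not just the singular values; that extra structure is not needed for Lemma~\ref{lemma:precond} or its downstream use in Theorem~\ref{thm:kaczmarz}, where only the singular-value correspondence enters through \r{eq:condition}.
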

\begin{proof}
  Consider the SVD of \math{\matA\matR^{-1}
    {\matR^{-1}}\transp\matA\transp}.
  Using
  \math{R^{-1}=(\matQ\transp\Pi\matA)^{-1}=(\Pi\matA)^\dagger\matQ},
  \eqar{
   \matA\matR^{-1}{\matR^{-1}}\transp\matA\transp
   &=&
   \matA(\Pi\matA)^\dagger\matQ\matQ\transp{(\Pi\matA)^\dagger}\transp\matA\transp
   \\
   &=&
   \matU\matSig\matV\transp
   \matV\Sigma^{-1}(\Pi\matU)^{\dagger}
   \matQ\matQ\transp
       {(\Pi\matU)^{\dagger}}\transp\Sigma^{-1}\matV\transp
       \matV\matSig\matU\transp
       \\
       &=&
       \matU(\Pi\matU)^{\dagger}
   \matQ\matQ\transp
       {(\Pi\matU)^{\dagger}}\transp\matU\transp
       \\
       &=&
   \matU(\Pi\matU)^{\dagger}
                  {(\Pi\matU)^{\dagger}}\transp\matU\transp.
                  \label{eq:variational}
   }
  In the last step \math{\matQ\matQ\transp} projects
  onto the column space of \math{\Pi\matU}, hence
  \math{\matQ\matQ\transp
    {(\Pi\matU)^{\dagger}}\transp={(\Pi\matU)^{\dagger}}\transp}.
  Using \math{\Pi\matU=\matU_{\Pi\matU}\matSig_{\Pi\matU}\matV_{\Pi\matU}\transp}, we get
  \math{(\Pi\matU)^{\dagger}
                  {(\Pi\matU)^{\dagger}}\transp=\matV_{\Pi\matU}\matSig_{\Pi\matU}^{-2}\matV_{\Pi\matU}\transp}, hence
  \mld{
    \matA\matR^{-1}{\matR^{-1}}\transp\matA\transp
    =
    \matU\matV_{\Pi\matU}\matSig_{\Pi\matU}^{-2}\matV_{\Pi\matU}\transp
    \matU\transp.
  }
  Since \math{\matU\matV_{\Pi\matU}} is orthogonal, we have
  constructed the SVD of \math{\matA\matR^{-1}{\matR^{-1}}\transp\matA\transp} and so up to a rotation of the row space, we can
  write down the SVD of \math{\matA\matR^{-1}}.
  For some orthogonal \math{d\times d} matrix \math{\matZ},
  \mld{
    \matA\matR^{-1}=
    \matU\matV_{\Pi\matU}\matSig_{\Pi\matU}^{-1}\matZ\transp
  }
  That is, the singular values of
  \math{\matA\matR^{-1}} are the inverses of the singular values of
  \math{\Pi\matU}.
\end{proof}
A useful corollary of Lemma~\ref{lemma:precond}
was observed in \cite{RT08}, namely that
\math{\matA\matR^{-1}} and \math{\Pi\matU} have the same condition
number. Indeed,
\mld{
  \kappa(\matA\matR^{-1})
  =
  \frac{\sigma_1^2(\matA\matR^{-1})}{\sigma_d^2(\matA\matR^{-1})}
  =
  \frac{1/\sigma_d^2(\Pi\matU)}{1/\sigma_1^2(\Pi\matU)}
  =
  \frac{\sigma_1^2(\Pi\matU)}{\sigma_d^2(\Pi\matU)}
  =
  \kappa(\Pi\matU).
}

We make heavy use of \math{\epsilon}-JLTs which can be constructed
and applied efficiently.
All constructions use some version of a Johnson-Lindenstrauss Transform.
A finite collection of points can be embeded into lower dimension while
preserving norms to relative error and inner products to additive error.
\begin{lemma}[JLT, \cite{JL1984,Ach03_JRNL}]\label{lemma:JLT}
Let \math{\Pi\in\R^{d\times r}} be a matrix of independent
random signs scaled by
\math{1/\sqrt{r}}.
For \math{n} points \math{\xx_1,\ldots,\xx_n\in
\R^{d}}, let \math{\zz_i=\Pi\transp\xx_i}.
For \math{0<\epsilon<1} and \math{\beta>0}, if
\mld{r\ge\frac{8 +4\beta}{\epsilon^2-2\epsilon^3/3}\ln (n+1),
\label{eq:JLT-r}}
then, with probability at least \math{1-n^{-\beta}},
for all \math{i,j\in[1,n]}:
\eqan{
&(1-\epsilon)\norm{\xx_i-\xx_j}^2\le\norm{\zz_i-\zz_j}^2\le
(1+\epsilon)\norm{\xx_i-\xx_j}^2&\\
&
|\zz_i\transp\zz_j-\xx_i\transp\xx_j|\le
\epsilon(\norm{\xx_i}^2+\norm{\xx_j}^2).
}
\end{lemma}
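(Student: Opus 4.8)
The plan is to reduce the uniform ``for all $i,j$'' guarantee to a finite collection of single-vector events, control each one by the distributional Johnson--Lindenstrauss bound for sign sketches, and then union bound. The engine is the following concentration fact: for \emph{any fixed} $\vv\in\R^d$, writing $\zz=\Pi\transp\vv$, one has $\Exp[\norm{\zz}^2]=\norm{\vv}^2$ (the off-diagonal sign products cancel in expectation), together with the exponential tail
\[
  \Prob\!\left[\,\bigl|\norm{\Pi\transp\vv}^2-\norm{\vv}^2\bigr|>\epsilon\norm{\vv}^2\,\right]
  \;\le\;
  2\exp\!\left(-\tfrac{r}{4}\bigl(\epsilon^2-\tfrac{2}{3}\epsilon^3\bigr)\right).
\]
Proving this single-vector tail is the heart of the matter and the step I expect to be hardest. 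By homogeneity it suffices to take $\norm{\vv}=1$, so $\norm{\zz}^2=\tfrac1r\sum_{k=1}^r S_k^2$ with $S_k=\sum_j s_{jk}v_j$ and $s_{jk}$ independent signs. Unlike the Gaussian sketch, where $r\norm{\zz}^2$ is exactly a $\chi^2_r$ variable, the $S_k^2$ are not chi-squared, so I would bound the moment generating function $\Exp[e^{tS_k^2}]$ of a Rademacher sum directly and show it is dominated by the Gaussian MGF $(1-2t)^{-1/2}$; a Chernoff argument on $\sum_k S_k^2$ then gives the two one-sided bounds whose exponents combine to $\tfrac r4(\epsilon^2-\tfrac23\epsilon^3)$. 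This is exactly the Achlioptas bound, so in the writeup I would invoke \cite{Ach03_JRNL} for this MGF comparison rather than reprove it.

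With the tail in hand, the two conclusions follow by choosing the right fixed vectors. For the distance statement I would apply the tail to each difference $\vv=\xx_i-\xx_j$. For the inner-product statement I would additionally apply it to each sum $\xx_i+\xx_j$ and each $\xx_i$, and then recover the bilinear estimate from norms by polarization: since $\zz_i\pm\zz_j=\Pi\transp(\xx_i\pm\xx_j)$,
\[
  \zz_i\transp\zz_j-\xx_i\transp\xx_j
  =\tfrac14\Bigl[\bigl(\norm{\zz_i+\zz_j}^2-\norm{\xx_i+\xx_j}^2\bigr)-\bigl(\norm{\zz_i-\zz_j}^2-\norm{\xx_i-\xx_j}^2\bigr)\Bigr],
\]
so that on the good event the right side is at most $\tfrac{\epsilon}{4}\bigl(\norm{\xx_i+\xx_j}^2+\norm{\xx_i-\xx_j}^2\bigr)=\tfrac{\epsilon}{2}\bigl(\norm{\xx_i}^2+\norm{\xx_j}^2\bigr)$ by the parallelogram law, comfortably within the claimed $\epsilon(\norm{\xx_i}^2+\norm{\xx_j}^2)$.

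Finally I would assemble the union bound and check that \eqref{eq:JLT-r} is exactly the calibrating threshold. The number of fixed vectors to which the tail is applied is $O(n^2)$; equivalently, adjoining the origin $\xx_0=\bm{0}$ turns the distance events into the $\binom{n+1}{2}$ pairwise distances among $\xx_0,\xx_1,\dots,\xx_n$ (this is where the $n+1$ enters). Multiplying the per-event failure probability $2\exp(-\tfrac r4(\epsilon^2-\tfrac23\epsilon^3))$ by this count and taking logarithms, the requirement that the total be at most $n^{-\beta}$ becomes $\tfrac r4(\epsilon^2-\tfrac23\epsilon^3)\ge(2+\beta)\ln(n+1)$, i.e. $r\ge\tfrac{8+4\beta}{\epsilon^2-2\epsilon^3/3}\ln(n+1)$, which is precisely \eqref{eq:JLT-r}. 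On the complementary event of probability at least $1-n^{-\beta}$, all per-vector norm-preservation inequalities hold simultaneously, which gives both displayed conclusions.
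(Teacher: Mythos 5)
Your proposal runs along the same route the paper intends by its citation: the Achlioptas per-vector tail $2\exp(-\frac{r}{4}(\epsilon^2-\frac{2}{3}\epsilon^3))$, a union bound over a finite family of fixed vectors with the origin adjoined (the source of $\ln(n+1)$), and polarization to pass from norms to inner products. Your single-vector tail and the deferral of the MGF comparison to \cite{Ach03_JRNL} are exactly right. Your symmetric polarization is, in fact, sharper than the paper's own remark: the paper invokes only $\xx_i\transp\xx_j=\frac{1}{2}(\norm{\xx_i}^2+\norm{\xx_j}^2-\norm{\xx_i-\xx_j}^2)$, which with the crude bound $\norm{\xx_i-\xx_j}^2\le 2(\norm{\xx_i}^2+\norm{\xx_j}^2)$ yields additive error $\frac{3\epsilon}{2}(\norm{\xx_i}^2+\norm{\xx_j}^2)$, whereas your sums-and-differences identity yields $\frac{\epsilon}{2}(\norm{\xx_i}^2+\norm{\xx_j}^2)$.

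The genuine gap is in your union-bound count, precisely where you diverge from the paper. The sum vectors $\xx_i+\xx_j$ are \emph{not} pairwise differences of the $n+1$ points $\bm{0},\xx_1,\ldots,\xx_n$, so your event family is not the $\binom{n+1}{2}$ distance events you charge for: it consists of the $\binom{n}{2}$ differences, the $n$ singletons, \emph{and} the $\binom{n}{2}$ sums, i.e.\ $n^2$ fixed vectors in all. The total failure probability is therefore $2n^2\exp(-\frac{r}{4}(\epsilon^2-\frac{2}{3}\epsilon^3))$, and with \eqref{eq:JLT-r} taken at equality this equals $2n^2(n+1)^{-(2+\beta)}=2\left(n/(n+1)\right)^{2+\beta}n^{-\beta}$, which exceeds $n^{-\beta}$ as soon as $(2+\beta)\ln(1+1/n)<\ln 2$, i.e.\ for every $n$ larger than roughly $1.44(2+\beta)$. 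So \eqref{eq:JLT-r} is not, as you claim, ``precisely the calibrating threshold'' for your event set; what your argument proves is the lemma with failure probability about $2n^{-\beta}$, or with $r$ enlarged by the additive term $4\ln 2/(\epsilon^2-2\epsilon^3/3)$. To keep the stated constants you must make the sums cost (almost) nothing: for instance, union-bound only over the $\binom{n+1}{2}$ distance events and use a per-pair case split --- when $\xx_i\transp\xx_j\ge 0$ the difference identity alone already gives error $\le\epsilon(\norm{\xx_i}^2+\norm{\xx_j}^2)$, since then $\norm{\xx_i-\xx_j}^2\le\norm{\xx_i}^2+\norm{\xx_j}^2$, while for $\xx_i\transp\xx_j<0$ the sum identity alone would, but that sum event must then be paid for --- or control the sum events only at the relaxed accuracy $2\epsilon$, exploiting the factor-$2$ slack in your parallelogram bound. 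Each of these requires extra argument and, in the worst case, still perturbs the constants. The gap is quantitative rather than conceptual, but for a lemma whose entire content is the explicit constant in \eqref{eq:JLT-r} and the probability $1-n^{-\beta}$, it needs to be closed.
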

The \math{\ln(n+1)} comes from adding \math{\bm0} to the points
which preserves all norms. Also, 
\math{\xx_i\transp\xx_j
=\frac12(\norm{\xx_i}^2+\norm{\xx_j}^2-\norm{\xx_i-\xx_j}^2)},
hence inner products are preserved to additive error
\math{\epsilon(\norm{\xx_i}^2+\norm{\xx_j}^2)}.

Using this result, we can get \math{\ell_2}-subspace embeddings using a
variety of constructions. The one we will use is an
oblivious fast-Hadamard subspace embedding known as the
Subsampled Random Hadamard Transform (SRHT). A result
from~\cite{T2011} which refines earlier results \cite{AR2014} is given
in next lemma which is an  application of Lemmas~3.3 and~3.4 in~\cite{T2011}.
The runtime in the lemma 
is established in~\cite{AL2013}.
\begin{lemma}[{\cite[Lemmas 3.3 and 3.4]{T2011}}]\label{lemma:SRHT-tropp}
  Fix \math{0<\varepsilon\le\frac12}. Let
  \math{\matU\in\R^{n\times d}}
  be orthogonal and \math{\Pi_\textsc{h}\in\R^{r\times n}} an SRHT
  with embedding dimension \math{r}
  satisfying:
  \mld{
    r\ge \frac{12}{5\varepsilon^2}\left(\sqrt{d}+\sqrt{8\ln(3n/\gamma)}\right)^2\ln d
    \ \in\ 
    O\left(\frac{\ln d}{\varepsilon^2}(d+\ln(n/\gamma))\right).
  \label{eq:FJLT-r}}
  Then, with probability at least \math{1-\gamma},
  \math{\Pi_\textsc{h}} is an \math{\epsilon}-JLT for \math{\matU},
  \mld{
    \norm{\matI-\matU\transp\Pi_\textsc{h}\transp\Pi_\textsc{h}\matU}_2\le\varepsilon.
  }
  Further, the product \math{\Pi_\textsc{h}\matA} can be computed in
  time \math{O(nd\ln r)} for any matrix \math{\matA\in\R^{n\times d}}.
\end{lemma}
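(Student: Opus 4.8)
The plan is to exploit the two-stage structure of the SRHT, writing $\Pi_\textsc{h}=\sqrt{n/r}\,\matS\matH\matD$, where $\matD\in\R^{n\times n}$ is a diagonal matrix of independent random signs, $\matH\in\R^{n\times n}$ is the normalized Walsh--Hadamard matrix (so $\matH\transp\matH=\matI_n$ and every entry has magnitude $1/\sqrt n$), and $\matS\in\R^{r\times n}$ selects $r$ rows uniformly at random. Since $\matH\matD$ is orthogonal, $\tilde\matU:=\matH\matD\matU$ is again orthonormal, so $\tilde\matU\transp\tilde\matU=\matI_d$, and the target quantity $\matI-\matU\transp\Pi_\textsc{h}\transp\Pi_\textsc{h}\matU=\tilde\matU\transp\tilde\matU-\frac{n}{r}\tilde\matU\transp\matS\transp\matS\tilde\matU$ measures exactly how well uniform row-subsampling preserves the Gram matrix of $\tilde\matU$. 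I would therefore split the argument into a \emph{flattening} step controlling the coherence of $\tilde\matU$ (the randomness in $\matD$), a \emph{sampling} step applying a matrix Chernoff inequality (the randomness in $\matS$), and finally a runtime computation.

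Flattening step: condition on $\matD$ and bound the largest row norm (leverage score) of $\tilde\matU$. The $i$-th row is $\ee_i\transp\matH\matD\matU$, a $d$-dimensional row vector whose squared norm is a function of the $n$ independent signs on the diagonal of $\matD$; each of its entries is a sign-weighted sum with coefficients of magnitude $1/\sqrt n$, and the mean of $\norm{\ee_i\transp\tilde\matU}^2$ is $d/n$ since $\sum_i\norm{\ee_i\transp\tilde\matU}^2=\norm{\tilde\matU}_F^2=\norm{\matU}_F^2=d$. Treating this row norm as a Lipschitz function of the signs and applying a sub-Gaussian concentration inequality, with a union bound over the $n$ rows, yields with probability at least $1-\gamma$ that
\mld{
  \max_i\norm{\ee_i\transp\tilde\matU}^2\le\frac{1}{n}\bigl(\sqrt d+\sqrt{8\ln(3n/\gamma)}\bigr)^2.
}
This is precisely the factor $(\sqrt d+\sqrt{8\ln(3n/\gamma)})^2$ in the statement, and it certifies that after the randomized Hadamard transform all leverage scores of $\tilde\matU$ are nearly uniform, of order $d/n$.

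Sampling step: on the flattening event, write the sampled Gram matrix as a sum of $r$ independent positive semidefinite rank-one matrices, $\frac{n}{r}\tilde\matU\transp\matS\transp\matS\tilde\matU=\sum_{t=1}^r\matX_t$ with $\matX_t=\frac{n}{r}(\ee_{i_t}\transp\tilde\matU)\transp(\ee_{i_t}\transp\tilde\matU)$ and $\Exp[\sum_t\matX_t]=\matI_d$. The flattening bound controls each summand, $\norm{\matX_t}_2\le\frac{n}{r}\max_i\norm{\ee_i\transp\tilde\matU}^2\le\frac1r\bigl(\sqrt d+\sqrt{8\ln(3n/\gamma)}\bigr)^2=:L/r$. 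Tropp's matrix Chernoff bound (Lemmas~3.3 and~3.4 of \cite{T2011}) then bounds $\Prob[\lambda_{\max}(\sum_t\matX_t)\ge1+\varepsilon]$ and $\Prob[\lambda_{\min}(\sum_t\matX_t)\le1-\varepsilon]$, each by roughly $d\cdot e^{-c\varepsilon^2 r/L}$, where the prefactor $d$ is the ambient dimension of the $d\times d$ Gram matrix and is the origin of the $\ln d$ term. Forcing this tail below $\gamma$ and solving for $r$ produces the threshold $r\ge\frac{12}{5\varepsilon^2}(\sqrt d+\sqrt{8\ln(3n/\gamma)})^2\ln d$; since the two events together give $\norm{\matI-\sum_t\matX_t}_2\le\varepsilon$, this is exactly the claimed $\varepsilon$-JLT guarantee.

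For the runtime, applying $\matD$ costs $O(nd)$, and although a full Walsh--Hadamard transform of the $d$ columns would cost $O(nd\ln n)$, only the $r$ sampled outputs are needed; the pruned/subsampled fast Hadamard transform of \cite{AL2013} computes $\matS\matH(\matD\matA)$ in $O(nd\ln r)$ time, giving the stated bound. \textbf{Main obstacle.} The delicate part is the sampling step: one must invoke the correct (without-replacement) matrix Chernoff inequality, verify the mean and per-summand spectral-norm bounds under conditioning on the flattening event, and track constants carefully enough to land on the exact coefficient $12/(5\varepsilon^2)$ and the $\ln d$ factor. This is where the full force of \cite{T2011} is used, and where a from-scratch argument would demand the most care.
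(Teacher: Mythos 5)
Your outline is correct and follows essentially the same route as the paper, which does not prove this lemma itself but imports it as an application of Lemmas~3.3 and~3.4 of \cite{T2011} — precisely your two stages, namely the flattening step (concentration of the row norms of $\matH\matD\matU$ after the randomized Hadamard rotation) and the matrix Chernoff bound for uniform without-replacement row sampling — with the $O(nd\ln r)$ runtime taken from \cite{AL2013}, exactly as you note. Since the paper delegates the constant-tracking you flag as the main obstacle to the cited source, your sketch matches the paper's treatment in both structure and level of detail.
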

Note that the SHRT \math{\Pi_{\textsc{h}}} is constructed obliviously of
\math{\matU}. We use the notation \math{\epsilon}-FJLT for such
JLTs that can be applied fast, to distinguish it from the regular
\math{\epsilon}-JLT in Lemma~\ref{lemma:JLT}.

\subsection{Randomized Pre-Conditioned Kaczmarz} 
\label{section:algo}

We first consider the consistent case, that is, there exists
\math{\ww_*} for which
\mld{\matX\ww_*=\yy.}
Equivalently, for any invertible matrix \math{\matD}, we can solve
\math{\matD\matX\ww=\matD\yy}.
The idea is to sample not using row-norms of \math{\matX}, but sample using
row-norms of some orthogonal basis for the column-space of \math{\matX}.
Let
\math{
  \matX=\matU\matSig\matV\transp.
}
If we solve \math{\vv_*} satisfying \math{\matU\vv_*=\yy}, then we recover
\math{\ww_*} using
\mld{
  \ww_*=\matV\matSig^{-1}\vv_*.
}
\math{\matU} is well conditioned so randomized Kaczmarz
has a label complexity \math{d\log(1/\epsilon)} and runtime
\math{d^2\log(1/\epsilon)}. The problem is getting
\math{U} is expensive. However, to identify the rows we need, we don't need
\math{\matU}, we just need the leverage scores. And a fast constant factor
approximation to the leverage scores will do. This can be accomplished
via two JLT's, one for the column-space and one for the row space,
\cite{malik186}.

Let \math{\Pi_1\in\R^{r_1 \times n}} with \math{r_1\in O(d\ln d)}
be an \math{\epsilon}-FJLT for \math{\matU} satisfying
\mld{
  \norm{\matI-(\Pi_1\matU)\transp\Pi_1\matU}\le \frac12.
}
The matrix \math{\matX_1=\Pi_1\matX}
can be computed in \math{O(nd\log r_1)=O(nd\log d)}.
There are \math{\matQ\in\R^{r_1\times d},\matT\in\R^{d\times d},\matP\in\R^{d\times d}}
such that
\mld{\Pi_1\matX=\matQ\matT\matP=\matQ\matR,}
where \math{\matQ} is orthogonal, \math{\matT} is upper triangular,
\math{\matP} is a permutation matrix and \math{\matR=\matT\matP}.
The reason writing \math{\matR} as the product \math{\matT\matP} is that
one can apply
\math{\matR^{-1}} or \math{(\matR^{-1})\transp} to any vector
\math{\xx} in \math{O(d^2)} without explictly computing \math{\matR^{-1}}.
This is because a permutation matrix is orthogonal so
\math{\matR^{-1}=\matP\transp\matT^{-1}}
and applying the inverse of an upper triangular \math{d\times d} matrix can
be done efficiently in \math{O(d^2)} \cite{GV96}.
Also note,
\mld{
  R^{-1}=(\matQ\transp\Pi_1\matX)^{-1}
  =
  (\Pi_1\matX)^{\dagger}(\matQ\transp)^\dagger
  =
  (\Pi_1\matX)^{\dagger}\matQ
  .
}
We now estimate the row-norms of \math{\matX\matR^{-1}} to relative error
by applying a
\math{JLT} to the rows. Specifically let \math{\Pi_2\in \R^{d\times r_2}} be
a JLT with \math{r_2\in O(\log n)}, satisfying
\mld{
    \frac12\norm{\ee_i\transp \matX\matR^{-1}}^2
    \le
    \norm{\ee_i\transp \matX\matR^{-1}\mat\Pi_2}^2
  \le
  \frac32
  \norm{\ee_i\transp \matX\matR^{-1}}^2.
  \label{eq:alg:JLT}
}
That is, we can estimate all the row-norms in \math{ \matX\matR^{-1}}
to within constant relative error by using the
row-norms in
\math{\matX\matR^{-1}\mat\Pi_2}.
We can compute \math{\matX\matR^{-1}\mat\Pi_2} in runtime
\math{O(ndr_2)=O(nd\log n)}. Hence, in \math{O(nd\log n+d^2\log n)}
we can compute
\math{\hat\ell_1,\ldots,\hat\ell_n}, where
\mld{
  \hat\ell_i=\norm{\ee_i\transp \matX\matR^{-1}\mat\Pi_2}^2.
\label{eq:lev-approx}
}
The reason we introduce these quantities \math{\hat\ell_i} is that they
approximate to within relative error the leverage scores of
\math{\matX}. Consider \math{\norm{\ee_i\transp \matX\matR^{-1}}^2},
\eqar{
  \norm{\ee_i\transp \matX\matR^{-1}}^2
  &=&
  \norm{\ee_i\transp \matX(\Pi_1\matX)^{\dagger}\matQ}^2
  \\
  &=&
  \ee_i\transp \matX(\Pi_1\matX)^{\dagger}\matQ\matQ\transp((\Pi_1\matX)^{\dagger})\transp
  \matX\transp\ee_i
  \\
  &=&
  \ee_i\transp \matX(\Pi_1\matX)^{\dagger}((\Pi_1\matX)^{\dagger})\transp
  \matX\transp\ee_i
  \\
  &=&
  \norm{\ee_i\transp \matX(\Pi_1\matX)^{\dagger}}^2.
  \label{eq:lev-JLT}
}
Since \math{\Pi_1\matX=\Pi_1\matU\matSig\matV\transp},
\mld{\matX(\Pi_1\matX)^\dagger=\matU\matSig\matV\transp\matV\matSig^{-1}(\Pi_1\matU)^\dagger=\matU(\Pi_1\matU)^\dagger.}
Therefore,
\eqar{
  \norm{\ee_i\transp \matX\matR^{-1}}^2
  &=&
  \ee_i\transp \matU(\Pi_1\matU)^\dagger{(\Pi_1\matU)^\dagger}\transp
  \matU\transp\ee_i
  \label{eq:approx-leverage}
}
By Part 5 of Lemma~\ref{lemma:embedding-properties},
\math{(\Pi_1\matU)^\dagger{(\Pi_1\matU)^\dagger}\transp\approx\matI_d}.
That means the row norms of \math{\matX\matR^{-1}} are approximately the
statistical leverage scores \math{\norm{\ee_i\matU}^2}. In fact,
the estimator in
\r{eq:lev-JLT} is exactly the one analyzed in
\cite{malik186} to obtain
an efficient approximation to the leverage scores.
This means
\math{\matX\matR^{-1}} is well conditioned.

We can now state the randomized Kaczmarz algorithm.
In a nutshell, sample rows of \math{\matX} using probabilities
proportional to \math{\hat\ell_i}, as opposed to
proportional to \math{\norm{\xx_i}^2}, and perform the projective
update using the preconditioned row \math{\xx_i\matR^{-1}}. By
\r{eq:lev-JLT},
\math{\hat\ell_i\approx\norm{\xx_i\matR^{-1}}^2}. In effect,
we are using Kaczmarz to solve the system \math{\matX\matR^{-1}\vv=\yy}
but instead of sampling using row-norms, we are sampling using
approximate row-norms. Since
\math{\matX\matR^{-1}} is well conditioned, the algorithm
will have exponential convergence independent of the input-conditioning.

\begin{algorithmic}[1]
  \State Construct the matrix \math{\matR} as described above
  using an \math{\epsilon}-FJLT for \math{\R^{n\times d}}.
  \State Compute the leverage scores \math{\hat\ell_i}
  as described in \r{eq:lev-approx}.
  \State Compute the cumulative probabilities
  \math{F_k=\sum_{i=1}^k\hat\ell_i/\sum_{j=1}^n\hat\ell_j}.
  \State Initialize the weights to \math{\vv_0=\bm0}.
  \For{\math{t=1,\ldots,K}}
  \State Independently sample (with replacement) an index \math{j\in [n]}
 using the probabilities \math{\hat\ell_i/\sum_{i}\hat\ell_i}.
 \State Let \math{\qq=\xx_{j}\transp \matR^{-1}=\ee_j\transp\matX\matR^{-1}}
 and let \math{s=y_{j}}.
  Perform the projective weight update,
  \mld{
    \vv_t=\vv_{t-1}-\frac{\qq(\qq\transp\vv_{t-1}-s)}{\norm{q}^2}
    \label{eq:kaczmarz-update}
  }
  \EndFor
  \State{\bf return} \math{\ww_K=\matR^{-1}\vv_K}.
\end{algorithmic}

Each step's runtime is as follows.

  \begin{algorithmic}[1]
    \State \math{O(nd\log d)} to compute \math{\Pi_1\matX} and then
    \math{O(d^3\log d)} to get \math{\mat{R}} from a QR-factorization.
    \State \math{O(nd\log n+d^2\log n)} to get \math{\hat\ell_i}.
    \State \math{O(n)} to get all cumulative probabilities
  \State \math{O(d)}.
  \For{\math{t=1,\ldots,K}}
  \State \math{O(\log n)} to sample once using binary search, so
  \math{O(K\log n)} in total.
    \State Applying \math{R^{-1}} is \math{O(d^2)} and the
  projective weight update is \math{O(d)}, so \math{O(Kd^2)} in total.
  \EndFor
  \State Applying \math{R^{-1}} is \math{O(d^2)}
  \end{algorithmic}
  
  Adding all of the above gives
  \math{O(nd\log n+d^3\log d)} preprocessing time in steps 1-3 and
  \math{O(K(\log n+d^2))} for running the Kaczmarz iterations.
  The label complexity is \math{K}.

  {\bf Note.} In step 6, \math{\matR^{-1}} has to be applied to each of
  \math{\xx_{j_1},\xx_{j_2},\ldots,\xx_{j_K}}, which can be pre-sampled
  ahead of time since the sampling is independent according to fixed
  probabilities. This requires solving an upper
  triangular system with multiple right hand sides,
  \mld{
    \matR\ [\qq_1,\qq_2,\ldots,\qq_K]=[\xx_{j_1},\xx_{j_2},\ldots,\xx_{j_K}].
  }

\subsection{Input-Independent Exponential Convergence of Kaczmarz} 

We now state and prove the main result for the consistent
case where we use the Algorithm described in the previous section.
In a nutshell, the convergence is
exponential
and does not depend on the conditioning of the input matrix \math{\matX}.
We make some simplifying assumptions, which in some cases
are almost vacuous. Set the failure probability to
\math{\gamma=1/n}, assume \math{n\ge 3} and \math{\ln n\le d}. Setting
\math{\epsilon=1/2} in \r{eq:FJLT-r} and simplifying, an SRHT with
\math{r_1\ge 48 d\ln d} is an \math{(1/2)}-FJLT for any
orthogonal \math{\matU\in\R^{n\times d}}, with probability at least
\math{1-1/n}. Similarly, with
\math{\beta=1, \epsilon=1/2} in \r{eq:JLT-r}, Lemma~\ref{lemma:JLT}
with \math{r_2\ge 72\ln(1+n)} gives a norm preserving \math{(1/2)}-JLT
for any \math{n} points in \math{\R^d}. 
\begin{theorem}\label{thm:kaczmarz}
  With probability  at least \math{1-2/n}
  (w.r.t. the random choice of \math{\Pi_1} and
  \math{\Pi_2}), the 
  Algorithm in the previous section has the following properties.
  \begin{enumerate}[nolistsep]
  \item The label complexity is \math{K}.
  \item The preprocessing time is in \math{O(nd\log n)}.
  \item The time to run the algorithm for a single right hand side \math{\yy}
    is in \math{O(K(\log n+d^2))}.
  \item The quality of approximation for \math{\ww_t=\matR^{-1}\vv_t}, where \math{t\in[K]}
    is determined by 
    \mld{
      \Exp[\norm{\vv_t-\vv_*}^2]
      \le
      \left(1-\frac{1}{9d}\right)^t\norm{\vv_*}^2.
    }
  \end{enumerate}
  Where \math{\vv_*=\matR\ww_*} and the expectation is over the \math{K} random rows used in the
  projective
  updates.
\end{theorem}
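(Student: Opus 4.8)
The plan is to dispatch the three timing and label-complexity claims directly from the step-by-step cost accounting that precedes the theorem, and then to spend the real effort on the convergence bound in part~4. Parts~1--3 are bookkeeping. The label complexity is $K$ because the algorithm reads exactly one label $y_{j_t}$ per iteration over $K$ iterations, so part~1 is immediate. For part~2 I would add the tabulated preprocessing costs (forming $\Pi_1\matX$ and its QR factorization, computing the $\hat\ell_i$, and the cumulative probabilities), giving $O(nd\log n + d^3\log d)$; invoking the standing regime $\poly(d)\ll n$ (so $d^2\log d\in O(n\log n)$) absorbs the $d^3\log d$ term into $O(nd\log n)$. Part~3 sums the per-iteration costs ($O(\log n)$ to sample by binary search, $O(d^2)$ to apply $\matR^{-1}$, $O(d)$ for the update) over $K$ iterations. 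All of this is conditioned on the single good event that $\Pi_1$ is a $(1/2)$-FJLT for $\matU$ and $\Pi_2$ is a $(1/2)$-JLT for the relevant rows; a union bound over the two failure probabilities of $1/n$ each yields the overall $1-2/n$ guarantee, and everything below holds on this event.

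For part~4, the first step is to recast the Kaczmarz update as an orthogonal projection of the error. Writing $\matB=\matX\matR^{-1}$ with rows $\bb_j\transp=\ee_j\transp\matX\matR^{-1}$, consistency gives $\matB\vv_*=\matX\ww_*=\yy$, so each sampled label satisfies $s=y_{j}=\bb_{j}\transp\vv_*$. Substituting this into the update \r{eq:kaczmarz-update} and subtracting $\vv_*$ shows that the error $\bm\delta_t:=\vv_t-\vv_*$ obeys $\bm\delta_t=(\matI-\bb_j\bb_j\transp/\norm{\bb_j}^2)\bm\delta_{t-1}$, i.e. $\bm\delta_t$ is the orthogonal projection of $\bm\delta_{t-1}$ onto the hyperplane perpendicular to the sampled row. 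Hence $\norm{\bm\delta_t}^2=\norm{\bm\delta_{t-1}}^2-(\bb_j\transp\bm\delta_{t-1})^2/\norm{\bb_j}^2$, and taking the conditional expectation over the index $j$ drawn with probability $p_j=\hat\ell_j/\sum_i\hat\ell_i$ gives a one-step contraction factor $1-\sum_j p_j(\bb_j\transp\bm\delta_{t-1})^2/(\norm{\bb_j}^2\norm{\bm\delta_{t-1}}^2)$.

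The second step is to lower-bound that contraction uniformly in $\bm\delta_{t-1}$. Because the sampling uses approximate leverage scores rather than exact squared row-norms, I would use \r{eq:alg:JLT}, namely $\hat\ell_j\ge\tfrac12\norm{\bb_j}^2$ and $\sum_i\hat\ell_i\le\tfrac32\norm{\matB}_F^2$, to write $\sum_j p_j(\bb_j\transp\bm\delta_{t-1})^2/\norm{\bb_j}^2\ge \norm{\matB\bm\delta_{t-1}}^2/(2\sum_i\hat\ell_i)\ge \sigma_d^2(\matB)\norm{\bm\delta_{t-1}}^2/(3\norm{\matB}_F^2)$. The spectral quantities are controlled by Lemma~\ref{lemma:precond} together with the $(1/2)$-JLT property of $\Pi_1$: since $\sigma_i^2(\matB)=1/\sigma_{d+1-i}^2(\Pi_1\matU)$ and $\sigma_i^2(\Pi_1\matU)\in[\tfrac12,\tfrac32]$, every singular value of $\matB$ lies in $[\tfrac23,2]$, so $\sigma_d^2(\matB)\ge\tfrac23$ and $\norm{\matB}_F^2\le 2d$. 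Plugging these in yields the one-step bound $\Exp[\norm{\bm\delta_t}^2\mid\bm\delta_{t-1}]\le(1-1/(9d))\norm{\bm\delta_{t-1}}^2$; iterating over $t$ and using $\bm\delta_0=-\vv_*$ (as $\vv_0=\bm0$) gives $\Exp[\norm{\vv_t-\vv_*}^2]\le(1-1/(9d))^t\norm{\vv_*}^2$.

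The main obstacle is the constant-factor accounting in the second step. Unlike the exact Strohmer--Vershynin analysis, where sampling proportional to $\norm{\bb_j}^2$ makes the weighted sum telescope cleanly into $\norm{\matB\bm\delta_{t-1}}^2/\norm{\matB}_F^2$, here three independent sources of slack must be composed: the JLT distortion of $\hat\ell_j$ relative to $\norm{\bb_j}^2$, the conditioning window $[\tfrac12,\tfrac32]$ for $\sigma_i^2(\Pi_1\matU)$, and the crude singular-value bounds on $\matB$. The delicate part is verifying that these compose to exactly the advertised rate $1/(9d)$ rather than a worse constant, and confirming that all spectral facts used are valid on the single $1-2/n$ good event, so that the expectation in part~4 --- taken only over the random rows --- is legitimate.
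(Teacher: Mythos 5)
Your proposal is correct and follows essentially the same route as the paper's proof: the same error-projection recasting of the update under consistency ($s=\qq\transp\vv_*$), the same one-step expectation bound with the factor $1/3$ slack from \r{eq:alg:JLT}, the same invocation of Lemma~\ref{lemma:precond} and Part~1 of Lemma~\ref{lemma:embedding-properties} to get $\sigma_d^2(\matX\matR^{-1})/\norm{\matX\matR^{-1}}_F^2\ge 1/(3d)$, composing to the rate $1/(9d)$, with parts~1--3 dispatched by the same cost tabulation and union bound. The only nitpick is a wording slip where you say ``every singular value of $\matB$ lies in $[2/3,2]$'' when you mean the squared singular values, but the bounds you actually use ($\sigma_d^2(\matB)\ge 2/3$, $\norm{\matB}_F^2\le 2d$) are the correct ones.
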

\begin{proof}
  In the weight update \r{eq:kaczmarz-update},
  use \math{s=\qq\transp\vv_*} and
  subtract \math{\vv_*} from both sides giving,
  \eqar{
    \vv_t-\vv_*
    &=&
    \left(\matI_d-\frac{\qq\qq\transp}{\norm{q}^2}\right)(\vv_{t-1}-\vv_*)
  }
  Take the norm-squared of both sides and using
  \math{(\matI_d-\qq\qq\transp/\norm{\qq}^2)^2=(\matI_d-\qq\qq\transp/\norm{\qq}^2)}
  gives
  \eqar{
    \norm{\vv_t-\vv_*}^2
    &=&
    (\vv_{t-1}-\vv_*)\transp
    \left(\matI_d-\frac{\qq_i\qq_i\transp}{\norm{q_i}^2}\right)
    (\vv_{t-1}-\vv_*)
    \\
    &=&
    \norm{\vv_{t-1}-\vv_*}^2-(\vv_{t-1}-\vv_*)\transp\frac{\qq\qq\transp}{\norm{q}^2}(\vv_{t-1}-\vv_*)
  }
  Taking the expectation of both sides,
  \eqar{
    \Exp[\norm{\vv_t-\vv_*}^2]
    &=&
    \norm{\vv_{t-1}-\vv_*}^2-\sum_{i=1}^n\frac{\hat\ell_i}{\sum_{j=1}^n\hat\ell_j}(\vv_{t-1}-\vv_*)\transp\frac{\qq\qq\transp}{\norm{q}^2}(\vv_{t-1}-\vv_*).
    \label{eq:exp1}
  }
  Using \r{eq:alg:JLT} and noting that
  \math{\qq_i=\ee_i\transp\matX\matR^{-1}}, we have that
  \mld{
    \frac12\norm{\qq_i}^2
    \le
    \hat\ell_i
  \le
  \frac32
  \norm{\qq_i}^2.
  }
  Therefore \math{\hat\ell_i/\sum_j\hat\ell_j\ge
    \frac12\norm{\qq_i}^2/ \frac32\sum_j
    \norm{\qq_j}^2}. Using \r{eq:exp1} and
  \math{\sum_j
    \norm{\qq_j}^2=\norm{\matX\matR^{-1}}^2_F} gives 
  \eqar{
    \Exp[\norm{\vv_t-\vv_*}^2]
    &=&
    \norm{\vv_{t-1}-\vv_*}^2-\frac{1}{3\norm{\matX\matR^{-1}}_F^2}(\vv_{t-1}-\vv_*)\transp\left(\sum_{i=1}^n\qq_i\qq_i\transp\right)(\vv_{t-1}-\vv_*)
    \\
    &{\buildrel (a) \over=}&
    \norm{\vv_{t-1}-\vv_*}^2-\frac{1}{3\norm{\matX\matR^{-1}}_F^2}
    \norm{\matX\matR^{-1}(\vv_{t-1}-\vv_*)}^2
  \\
  &\le&
  \left(
  1-\frac{\sigma_{d}^2(\matX\matR^{-1})}{3\norm{\matX\matR^{-1}}_F^2}
  \right)
  \norm{\vv_{t-1}-\vv_*}^2.
    \label{eq:exp2}
  }
  In (a) we used
  \math{\sum_i\qq_i\qq_i\transp={\matR^{-1}}\transp\matX\transp\matX\matR^{-1}}.
  To complete the proof we need to analyze the singular values of
  \math{\matX\matR^{-1}}.
  By Lemma~\ref{lemma:precond}, \math{\sigma_i^2(\matX\matR^{-1})
    =1/\sigma_{d+1-i}^2(\Pi_1\matU)}. Since
  \math{\Pi_1} is a (1/2)-FJLT for \math{\matU}, by part 1 in Lemma
  \ref{lemma:embedding-properties}
  \math{2/3\le 1/\sigma_i^2(\Pi_1\matU)\le 2}.
  Hence,
  \mld{
    \frac{\sigma_d^2(\matX\matR^{-1})}{\norm{\matX\matR^{-1}}^2_F}
    =
    \frac{1/\sigma_1^2(\Pi_1\matU)}{\sum_{i\in[d]}1/\sigma_i^2(\Pi_1\matU)}
    \ge
    \frac{2/3}{\sum_{i\in[d]}2}
    =
    \frac{1}{3d}.\label{eq:condition}
  }
  Using \r{eq:condition} in \r{eq:exp2} gives
  \mld{\Exp[\norm{\vv_t-\vv_*}^2]\le
 \left(
  1-\frac{1}{9d}
  \right)
  \norm{\vv_{t-1}-\vv_*}^2.
  \label{eq:app:convergence}
  }
  Since the rows are sampled independently, using iterated
  expectation gives the final result.
\end{proof}
Note that the convergence of \math{\vv_t} is exponential and input independent,
but the condition number does appear in the number of iterations needed
to get the error below a threshold. Since \math{\vv=\matR\ww},
\mld{
  \Exp[\norm{\matR(\ww_t-\ww_*)}^2]\le
 \left(
  1-\frac{1}{9d}
  \right)^t
  \norm{\matR\ww_*}^2.
}
Since
\math{\sigma_d^2(\matR)\norm{\ww}^2
  \le\norm{\matR\ww}^2\le\sigma_1^2(\matR)\norm{\ww}^2},
\mld{
  \Exp[\norm{\ww_t-\ww_*}^2]\le
 \left(
  1-\frac{1}{9d}
  \right)^t
  \kappa(\matR)\norm{\ww_*}^2.
  \label{eq:app:w-convergence}
}
Since \math{\kappa(\matR)\approx\kappa(\matX)}
and \math{\ln(1-1/9d)\approx -1/9d},
    setting the right hand side to \math{d/n} gives
\mld{
  t\approx 9d\ln(n\kappa(\matX)\norm{\ww_*}^2/d).
}
The dependence on \math{\kappa} is now benign, in the logarithm.  

\subsubsection*{Proof of Theorem~\ref{theorem:consistent}}

The result in Theorem~\ref{theorem:consistent} is essentially the same as
the one proved in the previous section without the factor of \math{9}.
When exact leverage scores are used for sampling and
the exact preconditioner \math{\matR} is known (both coming from the
exact SVD), then
\r{eq:app:convergence} becomes
\mld{
  \Exp[\norm{\vv_t-\vv_*}^2]\le
 \left(
  1-\frac{1}{d}
  \right)
  \norm{\vv_{t-1}-\vv_*}^2.
}
and
\r{eq:app:w-convergence} becomes
\mld{
\Exp[\norm{\ww_t-\ww_*}^2]\le
 \left(
  1-\frac{1}{d}
  \right)^t
  \kappa(\matX)\norm{\ww_*}^2.
}
This then gives Theorem~\ref{theorem:consistent}.

\end{document}